 \providecommand\@dotsep{5}
 \def\listtodoname{List of Todos}
 \def\listoftodos{\@starttoc{tdo}\listtodoname}
\definecolor{officegreen}{rgb}{0.0, 0.5, 0.0}
\newtheorem{assumption}{Assumption}
\newtheorem{theorem}{Theorem}
\def\BibTeX{{\rm B\kern-.05em{\sc i\kern-.025em b}\kern-.08em
    T\kern-.1667em\lower.7ex\hbox{E}\kern-.125emX}}
\begin{document}
%
%
\title{Decentralized Quantum Federated
Learning for Metaverse: Analysis, Design and Implementation
}


\author{Dev Gurung, Shiva Raj Pokhrel and Gang Li 

\thanks{The authors are with the School of Information Technology, Deakin University, Geelong, VIC 3220, Australia}}
\maketitle

\begin{abstract}
With the emerging developments of the Metaverse,  a virtual world where people can interact, socialize, play, and conduct their business, it has become critical to ensure that the underlying systems are transparent, secure, and trustworthy. 
To this end, we develop a decentralized and trustworthy quantum federated learning (QFL) framework. 
The proposed QFL leverages the power of blockchain to create a secure and transparent 
system that is robust against cyberattacks and fraud. 
In addition, the decentralized QFL system addresses the risks associated with a centralized server-based approach.
With extensive experiments and analysis, we evaluate classical federated learning (CFL) and QFL in a distributed setting and demonstrate the practicality and benefits of the proposed design. 
Our theoretical analysis and discussions develop a genuinely decentralized financial system essential for the Metaverse.
Furthermore, we present the application of blockchain-based QFL in a hybrid metaverse powered by a metaverse observer and world model.
Our \href{https://github.com/s222416822/BQFL} {implementation details and code} are publicly available \footnote{https://github.com/s222416822/BQFL}.
\end{abstract}

\begin{IEEEkeywords}
Quantum Federated Learning, Metaverse, Blockchain
\end{IEEEkeywords}

\section{Introduction}\label{sec-intro}
Quantum Machine Learning (QML)~\cite{biamonte2017quantum} has emerged as a promising paradigm in a number of computationally demanding fields, thanks to the proliferation of quantum computers and the ensuing surge in linear/algebraic computation and operational capabilities.
The underlying physics of QML, such as entanglements, teleportation, and superposition \cite{kwakQuantumDistributedDeep2022} are the key enablers of such high computational efficiency. 
When such enablers can be developed under the  Federated Learning (FL) framework, such as by employing Quantum Neural Networks 
(QNNs) \cite{abbasPowerQuantumNeural2021}, the training, 
learning, federation, prediction, and optimization capabilities can leapfrog simultaneously, leading to the development of Quantum FL (QFL).

\begin{figure}
    \centering
    \includegraphics[width=0.75\columnwidth]{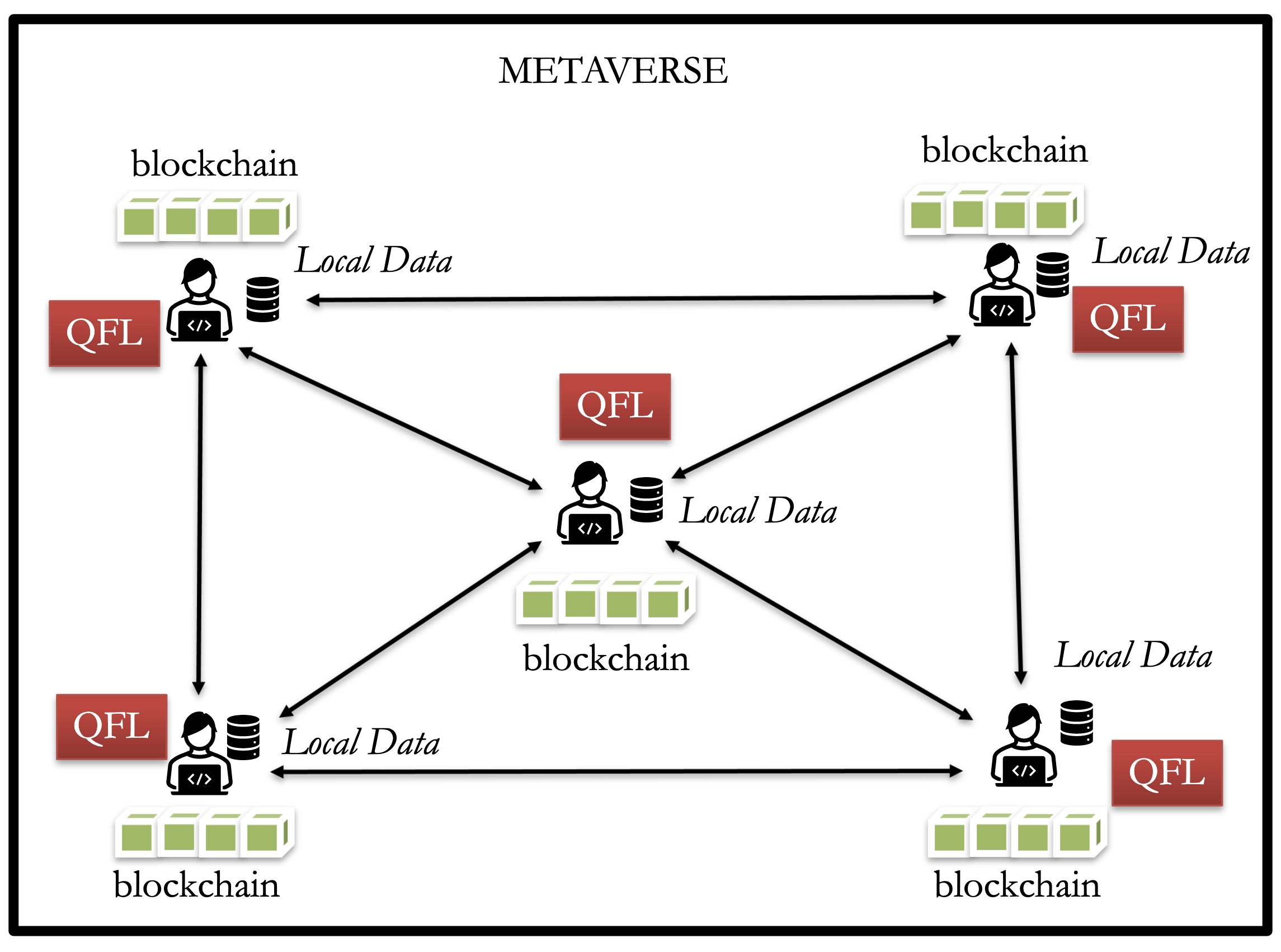} 
    \caption{Peer-to-Peer Blockchain-based Quantum Federated Learning (BQFL) integration into Metaverse. 
    Such a P2P model with blockchain improves reliability
    and trustworthiness considerably.}
    \label{fig:bqfl}
\end{figure}

On the other hand, the blockchain has successfully guaranteed the immutability and trustworthiness of FL~\cite{pokhrelFederatedLearningBlockchain2020, pokhrelBlockchainBringsTrust2021} while
facilitating decentralized financial transactions (e.g., cryptocurrency). 
It is essential to investigate the potential of blockchain for decentralizing
QFL functionalities. However, the greenfield development of QFL and decentralizing its capabilities are nontrivially challenging tasks.

Despite these challenges, we must overcome them so that the developed QFL not only becomes a part of our future but molds it, propelling us toward a more environmentally and technologically sophisticated civilization. The emerging Metaverse can be taken as an example of such a civilization, which aims to bring together multiple sectors into one
ecosystem and create a virtual environment that mirrors its natural equivalent and maintains persistence. 
Taking the metaverse as a motivating example, we
aim to build QFL capabilities to support an interactive
a platform that blends social networking, gaming, and simulation to
create a virtual space replicating the real world. 
Furthermore, such QFL can facilitate collaborative learning, which is very important in the metaverse.
\subsection{Motivation and Background}
In
this work, our primary focus is on designing a robust blockchain-based QFL (BQFL) framework that is specifically tailored to support
Metaverse.
In addition to that, we also shed light on the hybrid metaverse.
As illustrated in Figure~\ref{fig:bqfl},
our objective is to develop a peer-to-peer blockchain-based QFL framework suitable for the Metaverse.

One main motivation for this work is the increasing interest in 
the growing demand for secure, decentralized, and 
trustworthy ML algorithms in the Metaverse.
A QFL based on a centralized global server is prone
to single-point failure issues.
To address this issue, in this work, we propose a
decentralized QFL protocol that takes advantage of the
immutable and distributed nature of blockchain technology to create a more trustworthy
QFL framework.
Eventually, the design of decentralized QFL
ensures the security and trustworthiness of machine learning
algorithms and cryptocurrency transactions in the Metaverse.

\subsection{Related Works}

QFL is an emerging area with several studies in recent years \cite{larasatiQuantumFederatedLearning2022, kaewpuangAdaptiveResourceAllocation2022a,yangDecentralizingFeatureExtraction2021a,yunSlimmableQuantumFederated,xiaQuantumFedFederatedLearning2021a, xiaDefendingByzantineAttacks2021, zhangFederatedLearningQuantum2022}.
Most QFL works focus on optimization
\cite{kaewpuangAdaptiveResourceAllocation2022a,yangDecentralizingFeatureExtraction2021a,yunSlimmableQuantumFederated,xiaQuantumFedFederatedLearning2021a}.
Some works are for security aspects \cite{xiaDefendingByzantineAttacks2021, zhangFederatedLearningQuantum2022, gurungSECURECOMMUNICATIONMODEL2023} and some in terms of implementation \cite{qiFederatedQuantumNatural2022, yamanyOQFLOptimizedQuantumBased2021,abbasPowerQuantumNeural2021, huangQuantumFederatedLearning2022, chehimiQuantumFederatedLearning2022}.
BCFL is well studied in the literature; see \cite{pokhrelFederatedLearningBlockchain2020, pokhrelFederatedLearningMeets2020, chenRobustBlockchainedFederated2021, pokhrelDecentralizedFederatedLearning2020} etc. Pokhrel \textit{et 
al.}~\cite{pokhrelFederatedLearningBlockchain2020} introduced the BCFL design to
address privacy concerns and
communication cost in vehicular networks.
Whereas, Chen \textit{et al.}~\cite{chenRobustBlockchainedFederated2021} 
addressed the single point of failure and malicious device detection through the distributed validation mechanism. 
 These works~\cite{zhaoExactDecompositionQuantum2022a, 
 huangQuantumFederatedLearning2022, chehimiQuantumFederatedLearning2022, 
 pokhrelFederatedLearningBlockchain2020, chenRobustBlockchainedFederated2021}
have built enough ground to investigate BQFL, to harness the benefits of both blockchain and QFL, which is the main research problem considered in this paper.

Duan \textit{et. al} \cite{duanMetaverseSocialGood2021b} proposed a three-layer
metaverse architecture consists of infrastructure, interaction, and
ecosystem.
Metaverse is based on blockchain and has been studied in a few types of
literature \cite{yangFusingBlockchainAI2022a}, but the use and integration of machine learning, especially QFL, in Metaverse, has not yet been explored.
Bhattacharya \textit{et. al}
\cite{metaverseP2PGaming} proposed  FL integration metaverse for the gaming environment.
Chang \textit{et. al}
\cite{chang6GEnabledEdgeAI2022} provided a survey on AI-enabled by 6G for Metaverse.
Zeng \textit{et. al}
\cite{zengHFedMSHeterogeneousFederated2022} proposed a high-performance and efficient FL system for Industrial Metaverse.
The authors partly incorporated a new concept of Sequential-to-parallel (STP) training mode with a fast Inter-Cluster Grouping (ICG) grouping algorithm and claim that it effectively addresses the heterogeneity issues of streaming industrial data for better learning.

\subsection{Contributions}
In summary, the main contributions of this paper are as follows.
\begin{enumerate}
    \item We analyze and develop a novel trustworthy blockchain-based quantum federated learning (BQFL), i.e., a decentralized approach to QFL, by presenting and combining the principles of blockchain, quantum computing, and federated learning. 
    \item We implement BQFL and develop new insights into the feasibility and practicality of BQFL. In addition, we develop substantial reasoning with a thorough theoretical analysis in terms of employing BQFL under Metaverse, considering both privacy and security concerns. 
\end{enumerate}
\section{Identified Research Challenges}
We have identified several bottleneck challenges in
the field of classic and quantum FL and their potential
applications for orchestrating Metaverse, all of which are explained in the following.
\begin{enumerate}
    \item \textit{Limitations of CFL}:
    One of the key limitations of conventional CFL, which is used to
    train models across heterogeneous clients and aggregate them at a
    the central server is its restricted computational power 
    \cite{kwakQuantumDistributedDeep2022}. This can undermine the advantages of decentralized learning, especially when clients are a mix of pioneers and stragglers. To address this limitation, the use of QNNs is required, which is known as QFL \cite{abbasPowerQuantumNeural2021}. However, the literature currently lacks a complete analysis and understanding of QNNs over FL, and more research is needed in the field of BQFL to investigate how QNNs work in various FL scenarios, such as blockchain integration and data availability.
    
    \item \textit{Central-Serverless  QFL}:
    In QFL, only a central server typically aggregates the model parameters, leading to single points of failure and a lack of incentive mechanisms that can limit the performance of QNNs. 
    To address these issues, blockchain technology can be highly effective in introducing trust into QNNs due to its immutable nature. However, the architecture of blockchain-based QNNs is poorly understood in the literature, and we aim to investigate this by developing a BQFL framework that can resolve these issues.
    \item \textit{Challenges of P2P Blockchain QFL}:
Implementing the P2P blockchain FL is a complex and resource-intensive task requiring significant network bandwidth and computational power.
    As a result, extensive research work and studies are required in this direction.
    \item \textit{External use of Blockchain for QFL}:
    This approach will make the implementation simpler and more scalable than the P2P approach. However, it will be less decentralized and trustless than the P2P approach, as it might need a central server.
    \item \textit{Challenges in the development of Metaverse}:
Progress in technologies such as Virtual Reality and Augmented Reality (VR/AR) has led to the possibility of the existence and development of Metaverse. 
    However, problems such as the digital economy's transparency, stability, and sustainability cannot be solved with these technologies alone \cite{duanMetaverseSocialGood2021b}.
    Also, one of the main challenges or shortcomings in the development of Metaverse is the lack of a clear and distinctive architectural definition that could be used as a standard blueprint approach.
    Some challenges in the development of Metaverse can be:
    \begin{enumerate}
        \item \textit{How to cope with the computational requirement of Metaverse?}
         \item \textit{How to achieve efficient resources allocation and 
         solve large-scale data complexities?}
        \item \textit{Can QFL and Blockchain together solve the 
        issue?}
    \end{enumerate}
    \item \textit{Blockchain QFL for Metaverse}:
    BQFL is a decentralized approach to federated learning that 
    combines blockchain technology, quantum computing, and machine learning.
    On the other hand, Metaverse is a blockchain-based virtual world that allows users to interact with each other along with their digital assets and experiences, such as Virtual land where one can buy, purchase, and build businesses on the land.
    Blockchain technology can facilitate trusted digital ownership, interoperability, and decentralization to improve user experience and create new business models.
    
    \item \textit{Problem with today's quantum computers}: 
    Current quantum computers are a sort of proof-of-concept that the technology can be built \cite{preskillQuantumComputingNISQ2018}.
    But the problem is their infeasibility in real-life applications and the frequent error that occurs in them, referred to as 'noise' in terms of computation.
    Thus, this leads to the need for a thorough study and investigation of the implementation of how the blockchain works along with QFL networks.
    
    \item \textit{Multi-Model AI for Metaverse}:
    Metaverse requires different types of model learning, not just limited to text recognition. 
    Thus, the model training needs to learn different forms of data images, speeches, videos, etc. 
    One of the current works being done towards it is by Meta AI, which refers to as self-supervised learning \cite{baevskiData2vecGeneralFramework}.
    Also, a world model is needed specifically for Metaverse because it needs to be able to interpret and understand different forms of data such as text, video, image, etc.

    \item \textit{Hybrid Metaverse}:
     It won't be an overstatement to say that Metaverse will be our future in some way or another for sure.
    However, the metaverse, especially as proposed by Meta (former Facebook), has come under a number of criticisms that doubt its future.
    There are many limitations to the purely virtual metaverse.
    First, not everyone would love to be stuck in a virtual world 24/7. 
    Thus, architecture design in the form of a hybrid metaverse that can incorporate both AR and VR simultaneously needs to be redesigned.
    Thus, the question is, can we create a metaverse that is a replica of the real world where an actor in that environment can quickly switch between the virtual (VR) and semi-real (AR) world simultaneously?
    For example, suppose that a replica of an existing shopping center is created as a shopping center metaverse. 
    An actor/actress within VR equipment can enter the metaverse replica.
    However, while doing so, is it possible for her/him to be
    teleported to the actual store that s/he wants to visit and see items, in real interaction with the people in the shopping center as if s/he visits there?

   \item \textit{Data Utilization}:
   Most metaverse today focuses on avatar creation, with avatar actions limited to interaction. 
   Thus, with so many actors and devices working together, 
   the data thus generated must be utilized for an end-to-end solution \cite{metaverseP2PGaming}.
   The metaverse involves sensor data, such as the user's physical movement and motion capture, and personal data, like biometric data, etc. which are highly sensitive and personal.

   \item \textit{Full potential of Metaverse}:
   Due to limited resources and computing power,  Metaverse is still far from reaching its full potential of total immersion, materialization, and interoperability \cite{chang6GEnabledEdgeAI2022}.
   
\end{enumerate}


\section{Preliminaries, Theories, and Ideas}
In this section, 
we cover fundamental concepts of quantum machine learning and present
the theoretical concept behind the implementation of the  system framework proposed \cite{zhaoExactDecompositionQuantum2022a}.
\subsection{Terms and Terminologies}
Qubit is the fundamental unit of data storage in quantum computing \cite{QuantumDataBaidu}. 
Unlike classical computers, which use bits with only two values (0 and 1), 
qubits can also take on a range of values due to mechanical superposition. 
The number of qubits needed depends on the computational problems that need
to be solved.

The quantum channel refers to the medium used for transferring quantum
information (qubits) \cite{zhaoExactDecompositionQuantum2022a}. Quantum Federated Averaging aims to find a quantum channel that takes an input state and transforms it into the desired output.
Quantum Classifiers are devices that solve classification problems.
The quantum circuit takes the quantum state as an input. 
Tensor Circuit \cite{zhangTensorCircuitQuantumSoftware2022a} is an
open-source quantum circuit simulator that supports different features such as automatic
differentiation,
hardware acceleration, etc. 
It is especially useful for simulating complex quantum
circuits used in variational algorithms that rely on parameterized quantum
circuits.

Noisy Intermediate-Scale Quantum (NISQ) computers \cite{preskillQuantumComputingNISQ2018} with a limited
number of error-prone qubits are currently the most advanced quantum
computers available \cite{yunSlimmableQuantumFederated2022}. 
Quantum computers that are fully fault-tolerant and capable of
running large-scale quantum algorithms are not available at the moment.
Since real quantum computers are not easily accessible, quantum circuit
simulation on classical computers is necessary. The tensor circuit library is
commonly used for this purpose.

The quantum neural network is a variational quantum circuit used in quantum
computing. 
Variational quantum circuits (VQC) \cite{arthurHybridQuantumClassicalNeural2022, cerezoVariationalQuantumAlgorithms2021} are a technique that mimics
classical neural networks in quantum computing. It involves training a
dataset, encoding the quantum states as input, producing output quantum
states, and then converting the output back into classical data.

Data Encoding \cite{jerbiQuantumMachineLearning2023} is the process of transforming classical information into
quantum states that can be manipulated by a quantum computer. 
Amplitude encoding stores data in the amplitudes of quantum states, while
binary encoding stores information in the state of a qubit. Binary encoding
is preferable for arithmetic computations, while analogue encoding is
suitable for mapping data into the Hilbert space of quantum devices.

Quantum Convolutional Neural Network (QCNN) \cite{yangDecentralizingFeatureExtraction2021a} is a type of neural network
used in quantum computing.
Quantum perceptron is the smallest building block in Quantum Neural Networks (QNNs) \cite{gargAdvancesQuantumDeep2020}. 

Blockchain is a decentralized ledger system that
relies on distributed nodes to keep a record of transactions that cannot be
altered once committed.
Smart contracts are programs that automatically execute and follow the terms
of a contract or agreement. Blockchain consensus is a crucial task that
ensures the system's overall reliability and addresses unexpected
behavior from clients or malicious nodes on the network.
Decentralized Applications run autonomously on decentralized computing systems such as the blockchain. In peer-to-peer networks, each
the participating device has equal privileges and the distributed network
architecture is embraced.

\subsection{Design Ideas and Fundamentals}
\subsubsection{Gradient Descent}
To minimize a function $f(\overrightarrow{w})$ and its gradient $\delta 
f(\overrightarrow{w})$ starting from an initial point, the optimal approach
is to update the parameters in the direction of the steepest descent, given
by $\overrightarrow{w}_{n+1} = \overrightarrow{w}_n - \eta \Delta 
f(\overrightarrow{w})$. This process can be repeated until the function reaches a
local minimum $f(\overrightarrow{w}^*)$. Here, $\eta$ is the step size or
learning rate, which handles the magnitude of the update at each iteration.

\subsubsection{Local Training}
Initially, the first trainer accesses the global parameters $w_g$. 
Then, the Adam optimizer, a popular optimization algorithm used in machine
learning, is utilized through the \textit{optax} library for gradient-based optimization. 
The learning rate is set to a commonly used value of $1e-2$.
Next, the optimizer state $opt\_state$ is initialized with the current
parameters $w_d$. This state represents the optimizer's internal variables
and state, which are used to update the model parameters during training. 
The optimizer state is updated during the training process, which involves
iterating over the training data for a certain number of epochs, $epochs$.
In each iteration, the loss value $loss\_val$ and the gradient value $grad\_val$
are calculated for the current batch using the model parameters $w_d$, input
data $x$, output data $y$ and variable $k$. 
The optimizer state $opt\_state$ is updated using the calculated gradients
$grad\_val$ with the current parameters $params$, and the updated values
$updates$ are stored. These updates are then applied to the current model
parameters.
Finally, the mean loss $loss\_mean$ for the current batch is calculated using
$loss\_val$, which is a list of individual loss values for each example in the
batch.

\subsubsection{Class filter [filter function]}
To remove certain labels from a given dataset, one approach is to iterate
over the dataset and filter out samples with undesired labels. 
This can be accomplished using a conditional statement to check if each
sample's label is in the list of labels to be removed. 
If a sample's label is found in the list, it can be skipped or removed from
the dataset.

\subsubsection{Quantum Circuit [clf function]}
To create a quantum circuit, we define a function that applies $k$ layers of
quantum gates to a given circuit $c$. Each layer consists of gates applied to
each qubit in the circuit. To do this, we iterate over each layer first and then over each qubit in the circuit.
If the 2D numpy array $params$ represents the circuit parameters,
the rotation angles for each qubit are determined based on the corresponding
parameters at a particular index. To create an entangled state, a Controlled 
NOT (CNOT) gate is applied to each neighboring pair of qubits in the circuit.
Next, each qubit is rotated about the x-axis with a rotation angle determined by
the corresponding parameter at index [3 * j, i]. 
Then, a z-axis rotation is applied to each qubit with a rotation angle determined by the
corresponding parameter at index [3 * j + 1, i]. 
Finally, another x-axis rotation is applied to each qubit with an angle of rotation determined by
the parameters at index [3 * j + 2, i].
Here, $j$ represents the layer number, while $i$ refers to the qubit number. 
By applying this series of gates to the circuit, we can create a quantum state with desired
properties. 
It is important to note that the specific combination of gates used can have a significant
impact on the resulting quantum state i.e. a careful selection of gates and parameters is critical 
for achieving desired outcomes.

\subsubsection{Parameterized Quantum Circuits (PQC) }
When training a parameterized quantum circuit model, the objective is to
learn an arbitrary function from the data. This is done by minimizing a cost or
loss function, denoted as $f(\overrightarrow{w})$, with respect to the
parameter vector $\overrightarrow{w}$. The process involves minimizing the 
expectation value, $\bra{\psi(\overrightarrow{w})}\hat{H}\ket{\psi(\overrightarrow{w})}$, where 
$\hat{H}$ is the Hamiltonian of the system.
To achieve this, the trainers first send the parameters $w_n$ to the server. 
Then, the expectation value is computed as $\bra{\psi(w_n)}\hat{H}\ket{\psi(w_n)}$. 
Parameters are updated to $w_{n+1}$, and the process is repeated until
convergence.
Gradient-based algorithms are commonly used to optimize the parameters of a
variational circuit, denoted $\mathbb{U_w}$. 
For a PQC, its output is a quantum state $\ket{\psi(w_n)}$, where $w_n$ is a vector of parameters that can be tuned
\cite{zhangTensorCircuitQuantumSoftware2022a}.

\subsubsection{Prediction probabilities [readout function]}
The purpose of this function is to extract probabilities from a given quantum circuit. 
The function takes a quantum circuit $c$ as input and generates probabilities using one of two modes, namely
softmax and sampling method.
In "softmax" mode, the function first computes the logits for each node in the neural network, 
which are the outputs of the last layer before applying the activation function. 
These logits are then used to compute the softmax probabilities.
On the other hand, if the "sample" mode is selected, the function computes the wave-function
probabilities directly and then normalize them to obtain the output probabilities.

\subsubsection{Loss Function}
A neural network loss function is optimized in a quantum circuit taking four input arguments:
network parameters $params$, input data $x$, target data $y$, and the number of quantum circuit layers $k$. 
First, a quantum circuit with $n$ qubits is constructed using the input data $x$. 
The circuit is then modified by a classifier function that takes input arguments $params$, $c$, and $k$, thus transforming the circuit into a quantum neural network. 
The modified circuit is then passed to a read-out function that produces predicted
probabilities for each input. Finally, the loss is computed as the negative logarithmic likelihood
of the predicted probabilities and is averaged over all samples in the batch.

\subsubsection{Accuracy Calculation}
To evaluate the accuracy of the quantum classifier model with given parameters $params$, 
input data $x$, target labels $y$ and a number of layers $k$, the following steps are taken:
First, a quantum circuit $c$ is created using the input data $x$.
Then, the function $clf$ is applied to the circuit $c$ with the parameters $params$ to update the
circuit.
The updated circuit is then passed to the $readout$ function to obtain the predicted probabilities
for each label.
Then, the highest probability index is obtained for each input in $x$ 
and then compared with the true class label index for each input in $y$.
Finally, the precision is calculated by dividing the number of correct predictions
by the total number of inputs in $x$.

\begin{figure}[t]
    \centering
    \includegraphics[width=0.8\columnwidth]{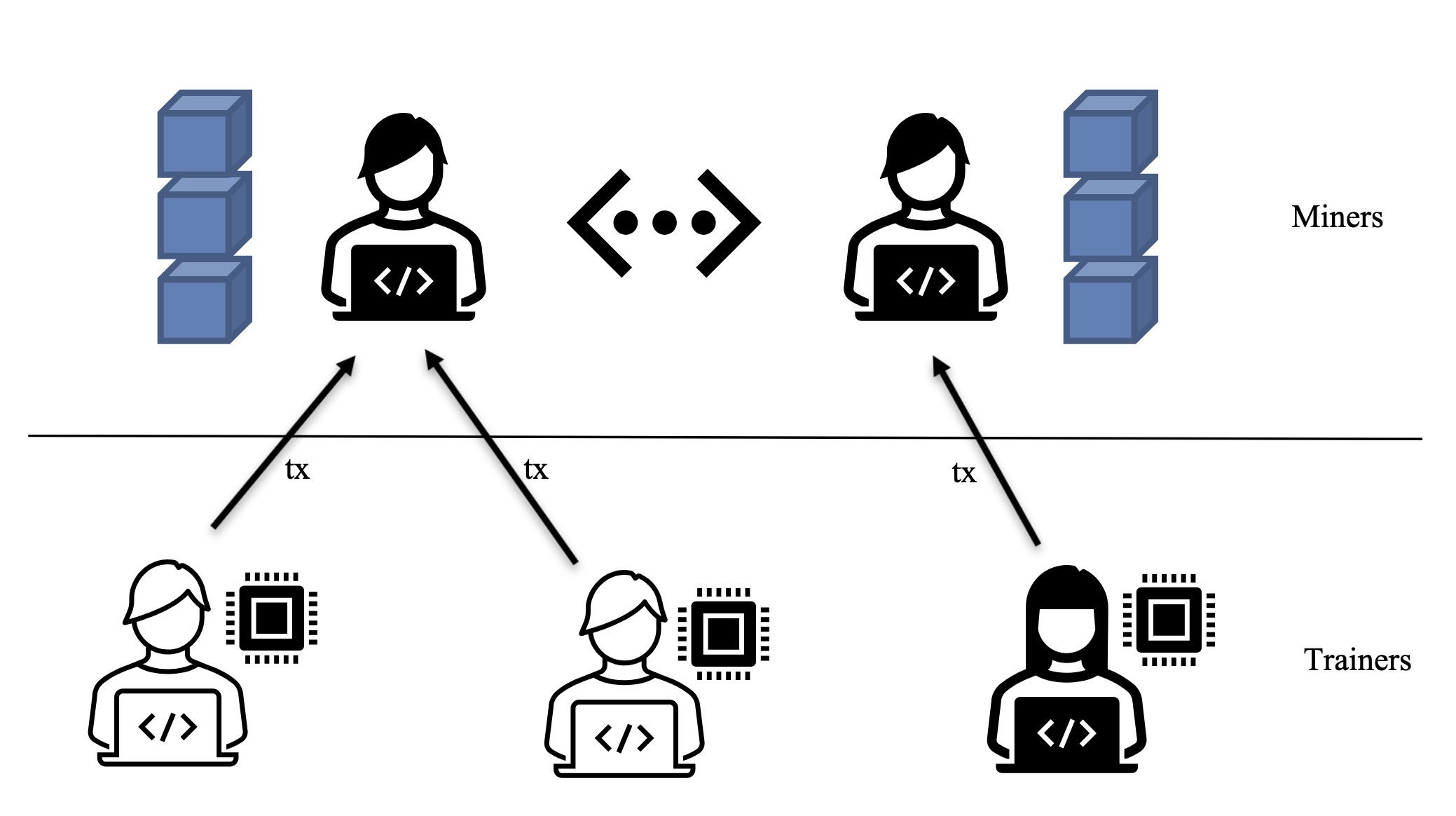}
    \caption{Blockchained QFL}
    \label{fig:BQFL}
\end{figure}

\section{Proposed BQFL Framework} 
\label{sec:proposed}

\begin{figure*}
    \centering
    \includegraphics[scale=0.46]{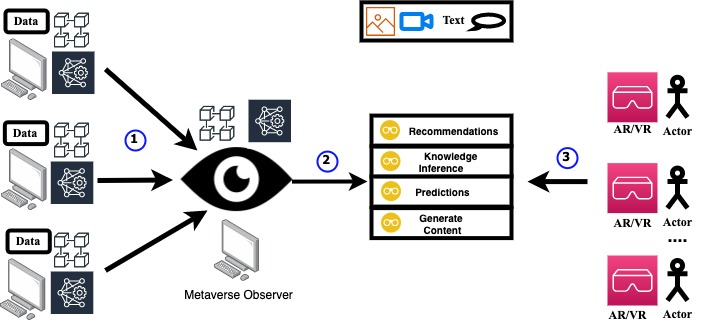}
    \caption{Metaverse Powered by BQFL. 1) BQFL training to create a world model for Metaverse Observer, 2) World Model used for different purposes, 3) Actors in Metaverse using AR/VR technology to access Metaverse.}

    \label{fig:metaverse}
\end{figure*}
We have proposed a trustworthy Blockchain-based QFL (BQFL) framework 
that integrates blockchain with QFL to address various issues related
to privacy, security, and decentralization in machine learning. 
The framework consists of two approaches, one where the blockchain is separate from QFL, 
and the other where the blockchain is within QFL in a completely peer-to-peer network.
\subsection{Motivating example: Metaverse with BQFL}
Consider a Metaverse space for a city center replicated in a digital twin creating a hybrid space for visiting shops, 
shopping, looking at items closely, and meeting people.
Unlike a complete virtual immersive experience, 
this metaverse could be a hybrid one that has options for complete immersiveness into the virtual world as well as in augmented reality in the actual store.
In this way, we can respond to people with AR by pushing a button or some specific method. 
For this purpose, the whole system network must perform extremely fast with the least delay issues. 
Thus, in this regard, as we have found out from experimental and theoretical aspects, BQFL indeed performs better.
\subsection{The BQFL Framework Design}
As shown in Figure \ref{fig:metaverse}, 
the proposed framework consists of an actual physical space and a virtual replica of the real-world space.
The three main building blocks are the QFL framework, Blockchain, and Metaverse. 
For QFL, we have nodes or devices with QNN for training with the local data. 
Once all local devices do the local training, then the final averaging of the local models for final global model generation can be done by Metaverse Observer or done in a pure decentralized fashion. 
The Metaverse observer is the main entity in the framework which looks after overall activities in the metaverse. 
This entity could be designed for each type of specific part in the metaverse to perform specific tasks.
For example, with the proposed framework, the main task for the metaverse observer is to provide knowledge inference, prediction, recommendation, etc. to the actors.
The actors or people participating in the metaverse use AR/VR technology to enter into the virtual (purely virtual world with avatars) or augmented reality version of real physical space.
The overall workflow for the framework is presented in Algorithm \ref{alg:qfl_metaverse}, whereas, the overall local training in the higher level representation is presented in Algorithm \ref{alg:qfl_blockchain}.

\begin{algorithm} [h!]
\caption{Hybrid Metaverse with BQFL powered Observer}
\label{alg:qfl_metaverse}
\begin{algorithmic}[1]
\Procedure{\textcolor{blue}{Initialization}}{}
\State Define $N$ number of nodes for QFL.
\State Hybrid Metaverse environment with both AR and VR. 
\State $m$ actors and 
Metaverse Observer, $\mathbb{O}$
\State Blockchain network.
\State VR/AR replica of the real world.
\State Actors in Metaverse, \{$\mathbb{A}_i\}$ use AR/VR technology.
\EndProcedure

\Procedure{\textcolor{blue}{Actor Activity}}{}
\State Access replica metaverse using AR or VR headset.
\State Meet people with avatars and interact.
\If{AR is chosen}
\State Go to the actual shopping center (teleportation)
\Else
\State Stay in VR world.
\EndIf

\EndProcedure

\Procedure{\textcolor{blue}{Device Training}}{}
\For{device in devices}
\State Train Local parameters.
\State Send learned parameters for FedAvg.
\State Retrieve global model after FedAvg.
\State 
\EndFor
\EndProcedure

\Procedure{\textcolor{blue}{Metaverse Observor}}{}

\For{device in devices}
\State Observer performs various tasks.
\State Recommendation System.
\State Actor activity detection.
\If{Actor activity legal}
\State Pass
\Else
\State Stop Actor.
\EndIf
\EndFor
\EndProcedure

\end{algorithmic}
\end{algorithm}

\subsection{Different Approaches}
\subsubsection{Blockchain separate from QFL - Blockchain externally}
Blockchain can be integrated into QFL in various ways depending on the need for decentralization. 
One approach is to use blockchain only for storing transactions such as rewards, model weights, 
etc., while the clients that train the model do not have copies of the blockchain. 
In this approach, the blockchain nodes can act as miners, and the QFL nodes can be different from the miners. 
This approach of using the blockchain externally is a secure and transparent way to
store model updates and other metadata. 
The blockchain is not used for coordination or
communication between clients.
The steps involved in integrating blockchain externally in QFL are as follows:
\begin{enumerate}[a.]
\item Set up a blockchain network with nodes to store and validate transactions.
\item Deploy a smart contract.
\item Submit model updates to the smart contract after each round of training by the clients.
\item The smart contract aggregates the updates and generates a global model for the next round of training.
\end{enumerate}

\subsubsection{Blockchain within QFL- Peer to Peer BQFL}
In another approach, we consider decentralized QML in a completely peer-to-peer network
where each client has a copy of the blockchain. 
We present a decentralized peer-to-peer network QFL for demonstration and experimental purposes.
The steps involved in integrating blockchain within QFL are as follows.
\begin{enumerate}[a.]
\item Clients communicate with each other through the blockchain network.
\item Blockchain is used to facilitate communication and coordination between clients, along with the storage of updates and other metadata. 
\item Each client in the network validates transactions and contributes to the consensus mechanism, ensuring the integrity and security of the system.
\end{enumerate}

This QFL P2P blockchain will allow for a fully decentralized and trustworthy system where clients
communicate and collaborate directly without any central authority or third-party service. 
Some of the obvious advantages of this approach are greater privacy, security, and transparency,
with reduced risk of a single point of failure or attack.

\subsubsection{Foundations of BQFL for Metaverse}
The architecture of the proposed BQFL consists of a quantum computing infrastructure, the QFL Algorithm, and Metaverse.

\begin{enumerate}[a.]
\item Quantum Computing Infrastructure:
To train machine learning models using QFL, we require quantum computing resources that are capable of running QFL algorithms.
\item QFL Algorithm:
The QFL algorithm should work in a distributed and privacy-preserving manner among multiple users contributing their local data for the training process.
\item Metaverse:
Metaverse is a virtual world where clients can collaborate and communicate with each other.
\end{enumerate}
 \subsubsection{Assumption for the framework design}
 We have made the following assumptions.

\begin{enumerate} [a.]
    \item FedAvg performs $E$ steps of SGDs in parallel on a set of devices. Opposite to QFL with a central server, 
    model averaging occurs without a central server architecture.
    \item In the presence of stragglers, devices that can become inactive are inevitable. Thus, all devices are assumed to be active throughout the process.
    \item Data sets are non-IID.
\end{enumerate}
 

\begin{algorithm}    
\caption{Blockchain QFL}
\label{alg:qfl_blockchain}
\begin{algorithmic}[1]
\Procedure{\textcolor{blue}{Initialization}}{}
\State Total $n$ devices, $\{d_1, d_2, d_3, ... , d_n\}$.
\EndProcedure

\Procedure{\textcolor{blue}{deviceTraining}}{{$pk, sk$}}
\For{device $d_i$ in $\{d_i\}$}
\State Encode data.
\State Resize data to fit the quantum circuit.
\State Remove data classes.
\State Train local params $w_i$ with QNN.
\State Send \{$w_i$\} to FedAvg.
\EndFor
\EndProcedure

\Procedure{\textcolor{blue}{FedAvg}}{}
\State Recieve trained model from the nodes.
\State Create a global model.
\EndProcedure
\end{algorithmic}
\end{algorithm}

\section{Theoretical Analysis}
In this section, we present theoretical analysis in terms of convergence for blockchain-based quantum federated learning.
\subsection{Convergence Study}
In this section, we examine the convergence properties of the proposed 
algorithms. 
Here, we analyze the convergence and conditions for convergence under different
assumptions about the data distribution and communication patterns.
From \cite{liConvergenceFedAvgNonIID2020}, we follow these assumptions.
\begin{assumption}
Objective functions $\Phi_1, \ldots, \Phi_N$ are all $L$-smooth, which is a standard assumption in federated learning. 
\begin{equation}
\Phi_k(\beta ) \leq \Phi_k(\theta) + (\beta  - \theta)^T \nabla \Phi_k(\theta) + \frac{L}{2} ||\beta  - \theta||^2_2.
\end{equation}
\end{assumption}
where, for any vectors $\beta $ and $\theta$, the function value at $\beta $ is upper-bounded by the function 
value at $\theta$, plus a term that depends on the gradient of $A_k$ at $\theta$ and the distance between $\beta $ and $\theta$. 

\begin{assumption}
    The objective functions $\Phi_1, \dots, \Phi_N$ are all $\mu$-strongly convex. 
    This means that for all $\beta $ and $\theta$, the following inequality holds:
    \begin{equation}
    \Phi_k(\beta ) \geq \Phi_k(\theta) + (\beta -\theta)^T \nabla A_k(\theta) + \frac{\mu}{2}||\beta -\theta||^2_2.
    \end{equation}
   
\end{assumption}
 where, $k \in \{1,\dots,N\}$, $\beta$ and $\theta$ are vectors in the same space as the gradients, 
 and $\mu$ is a positive constant that controls the strength of the convexity of the functions.

 \begin{assumption}
     Let suppose $\xi_k^t$ be sampled uniformly at random from the local data of the $k^{th}$ device. 
     The variance of the stochastic gradients in each device is bounded by a constant $\sigma_k^2$ i.e.
     
\begin{equation}
E ||\nabla \Phi_{k}\left(\theta_{t}^k, \xi_t^k\right)-\nabla \Phi_{k}\left(\theta_{t}^k\right)||^{2} \leq \sigma{k}^{2}, \text { for } k=1, \ldots, N.
\end{equation}
 \end{assumption}

Here, $\nabla \Phi_k(\theta_{t}^k, \xi_t^k)$ represents the stochastic gradient of the $k^{th}$ device's 
objective function with respect to its local model parameter at iteration $t$, evaluated at the 
random data sample $\xi_k^t$. 
$\nabla \Phi_k(\theta_{t}^k)$ represents the average stochastic gradient of
the $k^{th}$ device's objective function with respect to its local model parameter at iteration $t$ 
evaluated on all the local data samples of the device. 
 \begin{assumption}
 For stochastic gradients, its expected squared norm is uniformly bounded, i.e., 
     \begin{equation}
         \mathbb{E}||\nabla \Phi_k(\theta_{t}^k, \xi_t^k)||^2 \leq G^2 \text{ for all } k=1,\dots,N \text{ and } t=1,\dots,T-1.
     \end{equation}
 \end{assumption}
 where, $A_k$ is the objective function of the $k^{th}$ client, 
$\theta^k_t$ is the local model parameter of the $k^{th}$ client at iteration $t$, 
$\xi^k_t$ is the random data sample from the $k^{th}$ client's local data at iteration $t$, 
$\Delta \Phi_k(\theta^k_t, \xi^k_t)$ is the stochastic gradient of the $k^{th}$ client's objective function with 
respect to its local model parameter at iteration $t$, evaluated at the random data sample $\xi^k_t$.
$G$ is the bound on the expected squared norm of stochastic gradients.

Then,  from \cite{liConvergenceFedAvgNonIID2020} if assumptions 1-4 hold, then
fedAvg satisfies,
\begin{equation} \label{eq:convergence_fedAvg}
 \mathbb{E}[\Phi(\theta_T)] - \Phi^* \leq \frac{\kappa}{\gamma + (T-1)}\left(\frac{2B}{\mu} + \frac{\mu \gamma}{2} E||\theta_1 - \theta^*||^2\right),
\end{equation}
where, 
\begin{align*}
     B = \sum_{k=1}^N p_k^2 \sigma_k^2 + 6L\Gamma + 8(E-1)^2G^2.
\end{align*}

In equation \ref{eq:convergence_fedAvg},
$\mathbb{E}[\Phi(\theta_{T})] - \Phi^*$ represents the expected excess risk, $B$ is a constant term, 
and $p_k, \sigma_k, L, \Gamma$, and $G$ are parameters that depend on the problem and the algorithm.
Here, $\kappa$ and $\gamma$ are constants defined as $\kappa = \frac{L}{\mu}$ and $\gamma = \max \{ 8\kappa,
E \}$. Whereas,  $\eta_t = \frac{2}{\mu(\gamma + t)}$. 

\subsection{Considering encoding and decoding time for QFL}
Data encodings for quantum computing is the process of data representation for the quantum state of a system \cite{weigoldEncodingPatternsQuantum2021}.
This process of encoding classical data into a quantum state can be a 
significant bottleneck, as it can introduce significant time delays. 
The choice of encoding scheme 
used will depend on the specific task and the available hardware. 
Here, we consider a vanilla data encoding method which is known as "amplitude encoding".
It involves mapping classical data to the amplitudes of a quantum state. 
Given an input data vector of length $L$, the quantum state can be represented as:
$$|\psi\rangle = \sum_{i=1}^{L} a_i |i\rangle.$$
Here, $a_i$ is the $i$th element of the input data vector, and $|i\rangle$ is the basic state
corresponding to the binary representation of $i$.
A sequence of gates is applied to encode the data vector into a quantum circuit.
This sets the amplitudes of the quantum state to the values in the input data vector.
In order to do that, a set
of rotation, gates can be used that adjust the phase of each basis state, followed by a set of controlled-NOT (CNOT) gates for the amplitudes.
Even though, the time required to apply a single gate in a quantum circuit is typically on the order of 
nanoseconds or less, 
but for the whole data vector, it will depend on the number
of qubits and the complexity of the encoding process.
Assuming that we have a quantum circuit with $n$ qubits, the time required to encode a single
element of the input data vector can be approximated as:
$$t_{\text{i}} \approx n \cdot t_{\text{gate}}$$
Here, $t_{\text{gate}}$ is the time required to apply a single gate in the quantum circuit. 
Therefore,  for the entire input data vector, the total time required to encode the  can be approximated as:

\begin{equation} \label{eqn:encoding_time}
    t_{\text{T}} \approx L \cdot n \cdot t_{\text{gate}}
\end{equation}

The above equation  follows the assumption that we encode each element of the input data vector one at a time. 

\subsection{Blockchain Time Delay}
Blockchain time delay can include both communication delay and consensus delay.
Also, the time required to share the new copy of the blockchain ledger with each other can be added to the total delay time.
We also need to consider the time it takes for a block to be appended to the blockchain after it is proposed by a node as well as the block propagation delay.

Suppose we have a blockchain network with $n$ nodes, and each node has a stake $Stake_i$ and a
probability $prob_i$ of being selected as the next validator to create a block. 
The probability of a
the node $i$ being selected as the validator can be represented as,

$$prob_i = \frac{Stake_i}{Stake}$$

Let's assume that each node takes $t$ seconds to create a block and the network latency is $L$ 
seconds. 
The time it takes for a block to be created and validated is:

$$T = \max(t, L) + t$$


Now,  to calculate the expected time it takes for a block to be created and
validated by the network, given the stake and probability of each node, we can write,
\begin{equation}\label{eqn:blockchain_time}
    E[T] = \frac{1}{\sum_{i=1}^n prob_i} \sum_{i=1}^n prob_i T
\end{equation}


\begin{theorem} \label{theorem:main_convergence}
The proposed blockchain-based quantum federated learning algorithm satisfies:

\begin{align} 
E[total\_time] \leq  \frac{\kappa}{\gamma + (T-1)}\left(\frac{2B}{\mu} + \frac{\mu \gamma}{2} E||\theta_1 - \theta^*||^2\right) + \nonumber\\
\frac{1}{\sum_{i=1}^n prob_i} \sum_{i=1}^n prob_i T +  L \cdot n \cdot t_{\text{gate}}
\end{align}
\end{theorem}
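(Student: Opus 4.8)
The plan is to read $total\_time$ as the end-to-end cost of one run of the protocol and to split it additively into the three contributions that have already been bounded in the preceding subsections: the federated-optimization cost, the per-block consensus/propagation latency, and the amplitude-encoding overhead. First I would write
\[
total\_time \;=\; T_{\text{learn}} + T_{\text{bc}} + T_{\text{enc}},
\]
where $T_{\text{learn}}$ is the term governed by how far the FedAvg-averaged model sits from the optimum after $T$ rounds, $T_{\text{bc}}$ is the time a newly proposed block spends in the consensus-and-propagation pipeline before it is appended to the ledger, and $T_{\text{enc}}$ is the classical-to-quantum encoding time paid by a device when it loads its data into the circuit. Under assumption (b) of Section~\ref{sec:proposed} --- all devices remain active, no stragglers --- these stages run in sequence on every device with no overlap, so the decomposition is exact, and by linearity of expectation $E[total\_time] = E[T_{\text{learn}}] + E[T_{\text{bc}}] + E[T_{\text{enc}}]$.

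Next I would bound the three expectations one at a time. For the learning term, the point is that Assumptions 1--4 stated above ($L$-smoothness, $\mu$-strong convexity, bounded local-gradient variance $\sigma_k^2$, and uniformly bounded second moment $G^2$) are exactly the hypotheses under which the FedAvg guarantee of \cite{liConvergenceFedAvgNonIID2020} holds for non-IID data with step size $\eta_t = 2/(\mu(\gamma+t))$, $\kappa = L/\mu$ and $\gamma = \max\{8\kappa, E\}$; invoking equation~(\ref{eq:convergence_fedAvg}) with $B = \sum_{k=1}^N p_k^2\sigma_k^2 + 6L\Gamma + 8(E-1)^2 G^2$ gives
\[
E[T_{\text{learn}}] \;\le\; \frac{\kappa}{\gamma+(T-1)}\left(\frac{2B}{\mu} + \frac{\mu\gamma}{2}\, E||\theta_1-\theta^*||^2\right).
\]
For the blockchain term, I would take the per-block latency $T=\max(t,L)+t$ derived above together with the proof-of-stake selection law $prob_i = Stake_i/Stake$ and average over validators, that is, apply equation~(\ref{eqn:blockchain_time}) to obtain $E[T_{\text{bc}}] = \frac{1}{\sum_{i=1}^n prob_i}\sum_{i=1}^n prob_i\, T$. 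For the encoding term, amplitude-encoding a length-$L$ vector element by element on $n$ qubits costs at most $L\cdot n\cdot t_{\text{gate}}$ by equation~(\ref{eqn:encoding_time}), which is deterministic, so $E[T_{\text{enc}}] \le L\cdot n\cdot t_{\text{gate}}$. Summing the three bounds yields exactly the inequality in the statement.

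The step I expect to be the real obstacle --- and the one deserving the most care --- is justifying the decomposition itself rather than any individual estimate. One has to argue that the optimization, consensus, and encoding costs are genuinely additive, not pipelined or double-counted across the $N$ training devices and the $n$ blockchain nodes, and, more delicately, that it is legitimate to treat the FedAvg excess-risk bound as the ``learning'' component of a wall-clock time budget; this is the mild modeling convention implicit in how $total\_time$ is defined for the proposed protocol, and I would state it explicitly before anything else. Once that decomposition is granted, the remainder is a mechanical substitution of three results already in hand, requiring no new estimate.
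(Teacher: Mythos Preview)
Your proposal is correct and follows essentially the same approach as the paper: the paper's own proof consists of a single sentence invoking equations~(\ref{eq:convergence_fedAvg}), (\ref{eqn:encoding_time}), and (\ref{eqn:blockchain_time}) and summing them, which is exactly your three-term decomposition $E[total\_time]=E[T_{\text{learn}}]+E[T_{\text{bc}}]+E[T_{\text{enc}}]$ followed by substitution of the three bounds. Your write-up is considerably more careful than the paper's --- in particular, you explicitly flag the modeling convention that treats the FedAvg excess-risk bound as a wall-clock ``learning time'' contribution, a point the paper leaves entirely implicit --- but the underlying argument is identical.
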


\begin{proof}
Using \eqref{eq:convergence_fedAvg},  \eqref{eqn:encoding_time}  and \eqref{eqn:blockchain_time},
we can prove that the total time convergence is satisfied as in Theorem \ref{theorem:main_convergence}.
\end{proof}

\subsection{Metaverse Factors}

With important insights form \cite{sicknessReductionTechnology2020}, we can express the meta immersion experience $E_{meta}$ for any user $k$ as, 

\begin{align}
    E_{meta}^k = D_{rate}^k(1 - Ulink_{errorRate}^k) * VR_e^k
\end{align}
where, $D_{rate}$ is the download data link that impacts lossless virtual experience, $Ulink_{errorRate}$ is uplink tracking bit error rate and $VR_e$ is a virtual experience that is subjective to the user.

For quantifying virtual experience, we can say,
\begin{equation}
    VR_e \propto \{activity, onlineTime, ...\}
    \label{eqn:vr_experience}
\end{equation}

From Equation \ref{theorem:main_convergence} and \ref{eqn:vr_experience},
\begin{equation}
    VR_e \propto E[total\_time]
    \label{eqn:vr_experience_prop}
\end{equation}

From our experimental results \ref{fig:performance}, QFL performs faster than CFL.
Thus, this directly implies that with QFL we will have better $VR_e$ than CFL and, in general, better service indicators and technical indicators.

\begin{theorem}
    With BQFL, the experience of Metaverse $VR_e$ is always greater or more satisfactory than CFL.
\end{theorem}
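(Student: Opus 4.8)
The plan is to chain together the quantitative relations already established in this section. First I would recall Equation~\eqref{eqn:vr_experience}, which posits $VR_e \propto \{activity, onlineTime, \ldots\}$, and observe that both \emph{activity} and \emph{onlineTime} are monotone decreasing in the end-to-end latency experienced by a user: the faster the federated training and blockchain round-trip completes, the more time an actor can spend interacting in the Metaverse and the more activity they can perform in a fixed window. This is exactly what Equation~\eqref{eqn:vr_experience_prop} records, namely $VR_e \propto E[total\_time]$ in the inverse sense (smaller total time yielding larger experience). So the first step is to make that monotonicity precise: fix the link-quality terms $D_{rate}^k$ and $Ulink_{errorRate}^k$ in $E_{meta}^k$, and argue that $E_{meta}^k$ is an increasing function of $VR_e^k$, which in turn is a decreasing function of $E[total\_time]$.

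Second, I would invoke Theorem~\ref{theorem:main_convergence}, which bounds $E[total\_time]$ for the proposed BQFL algorithm by the sum of the FedAvg convergence term $\frac{\kappa}{\gamma+(T-1)}\bigl(\frac{2B}{\mu}+\frac{\mu\gamma}{2}E\|\theta_1-\theta^*\|^2\bigr)$, the blockchain consensus term $\frac{1}{\sum_i prob_i}\sum_i prob_i T$, and the amplitude-encoding term $L\cdot n\cdot t_{\text{gate}}$. The crux is then to compare this with the corresponding quantity for CFL. Here I would appeal to the experimental evidence in Figure~\ref{fig:performance}, which shows QFL completes rounds faster than CFL; the theoretical hook is that the QNN-based local training in BQFL attains the same $L$-smooth, $\mu$-strongly-convex convergence guarantee (Assumptions~1--4) but with a smaller per-round wall-clock cost, so the dominant term in $E[total\_time]$ is no larger for BQFL than for CFL under matched accuracy targets. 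Combining this with the monotonicity from the first step gives $VR_e^{\mathrm{BQFL}} \geq VR_e^{\mathrm{CFL}}$, hence $E_{meta}^{k,\mathrm{BQFL}} \geq E_{meta}^{k,\mathrm{CFL}}$ for every user $k$.

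The main obstacle I anticipate is that the claimed inequality is really a hybrid of an empirical observation and an analytical bound: Theorem~\ref{theorem:main_convergence} upper-bounds $E[total\_time]$ but does not by itself establish that the CFL counterpart has a \emph{larger} bound — the convergence term is formally identical, and the advantage lives in the encoding/per-gate constants and in the measured speedup rather than in a clean closed-form comparison. To close this honestly I would either (i) state explicitly that the result is conditional on the experimentally observed ordering $E[total\_time]^{\mathrm{BQFL}} \le E[total\_time]^{\mathrm{CFL}}$ from Figure~\ref{fig:performance}, and then the rest follows by the monotonicity argument; or (ii) add a mild modelling assumption that the quantum local-update cost per round is bounded by the classical one, which makes each of the three summands in Theorem~\ref{theorem:main_convergence} no larger for BQFL, yielding the conclusion unconditionally. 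I would present route (i) as the primary argument, flag route (ii) as the stronger alternative, and keep the proof short: monotonicity of $E_{meta}$ in $VR_e$, monotonicity of $VR_e$ in $1/E[total\_time]$, and the ordering of $E[total\_time]$, composed in that order.
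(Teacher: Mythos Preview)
Your proposal is correct and follows essentially the same approach as the paper's own proof, which simply invokes Equation~\eqref{eqn:vr_experience_prop} and Theorem~\ref{theorem:main_convergence} together with the experimental comparison in Figure~\ref{fig:performance}. If anything, your version is more careful: you make the monotonicity chain explicit and correctly flag that the ordering $E[total\_time]^{\mathrm{BQFL}} \le E[total\_time]^{\mathrm{CFL}}$ rests on the empirical results rather than on a closed-form comparison, a point the paper leaves implicit.
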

\begin{proof}
     Different technical and service indicators such as $D_{rate}, Uplink_{errorRate}$, etc. are impacted proportionally to $VR_e$, $E[total\_time]$. Using \eqref{eqn:vr_experience_prop} and Theorem \ref{theorem:main_convergence}, we can prove \textit{Theorem 2}. We will discuss later how our experimental analysis supports the conclusion of \textit{Theorem 2}.
\end{proof}
\subsection{Metaverse Ecosystem}
Different aspects of Metaverse include user behavior prediction, content recommendation, object
recognition, training data, etc.
The metaverse can be considered as a network
of interconnected nodes, where each node is a user,
object, or virtual space, and edges are the relationships or
connections between them. 
Graph theory can be used for mathematical
analysis and modeling of such networks.
In the metaverse, various events,
interactions, and behaviors occur probabilistically. 
Thus, probability
theory can be used to model the likelihood of events happening
like the probability of encountering a particular object or
meeting a specific user. 
With statistical analysis,  
understanding patterns, trends, and distributions within the
metaverse can be achieved.
Also, user behavior can be analyzed within the metaverse in a probabilistic manner.
Finally, machine learning algorithms could
be used to predict and simulate user actions based on historical
data, contextual information, and user preferences.

To this end, we have considered three key aspects of metaverse ecosystems that can be orchestrated with BQFL. They are:
\subsubsection{PQ security}
\cite{duanMetaverseSocialGood2021b} For a fair and transparent ecosystem, security is crucial.
This demonstrates an impeding need for the post-quantum secure BQFL.

\subsubsection{Autonomous Governance}
Autonomous Governance is the key to the success of the whole system.
This prevents the system from being controlled by a certain group of people.

\subsubsection{AI-Driven Metaverse Observer}
Duan \textit{et. al} \cite{duanMetaverseSocialGood2021b} presented the idea of an AI-Driven Metaverse Observer, who can track real-time operation data from the Metaverse and analyze it. 
This observer can make a recommendation on ongoing events to users.
For this approval rating system can be implemented.
This would provide global information that can assist users to capture timely events in a better way.

\section{Experiments and Results}
To study the integration of blockchain in QFL, we have inherited implementation approaches in 
\cite{pokhrelFederatedLearningBlockchain2020},  
\cite{chenRobustBlockchainedFederated2021} for BCFL and \cite{zhaoExactDecompositionQuantum2022a} for QFL.
The experiments were run in Google Colab Pro as well as on a local computer.

\subsection{Preprocessing}
Image data are preprocessed for training and testing purposes as follows.
The first pixel values of the input images are scaled in the range of [0,1].
After that, encoding is applied to the images depending on the type of encoding.
With the "vanilla" encoding, the mean is set to 0.
With "mean" encoding, the mean of training images is subtracted from all images.
While with the "half" encoding approach, the images are shifted by 0.5.

Another important step in preprocessing is resizing the image to the size of
$[int(2^{n/2}), int(2^{n/2})]$, where $n$ is the number of qubits used in the quantum circuit.
Eventually, the resized images are flattened to a 1D array of size $2^n$.
Finally, the pixel values are normalized by dividing each image by the square root of the sum of its squared pixel values so that we have a unit length.
The labels for the input images are hot encoded to match the output format of the model.

\subsection{Dataset Preparation}
Quantum computers cannot directly process the classical representations of datasets. The data preparation consists of the following steps:
\begin{enumerate}
    \item Data Loading: Libraries like TensorFlow can be used for the loading and splitting of data sets into a training and test set.
    With the initial processing of normalization and encoding, the downscaling of the images needs to be performed afterward.
    \item Downscaling images: An image size of 28 × 28 is too large for existing quantum computers. 
    Thus, they need to be resized to a size of 16 × 16 (for an 8-qubit quantum circuit) or \num{4} X \num{4}.
    \item Data Encoding is an essential step in QML.
    We perform encoding of classical data into states of qubits.
\end{enumerate}

We use the MNIST dataset for experimental purposes.
MNIST dataset consists of 70,000, 28 × 28 images of handwritten digits.
The digits consist of 10 classes (0 to 9).
For this work, we have performed data sharding similar to \cite{zhaoExactDecompositionQuantum2022a}.
In doing so, we remove the samples with labels equal to '8' and '9' from both the training and the testing sets.
We follow a cycle-m structure, where each client has access to the data set to only
$m$ classes at a time.
We also consider $n = 9$ clients as a whole with $7$ assigned as workers and $2$ miners.
Both quantum FedAvg and quantum FedInference \cite{zhaoExactDecompositionQuantum2022a} are experimented with.
For classical learning, normal FedAveraging is used as in \cite{chenRobustBlockchainedFederated2021}.
The batch size is 128 and the learning rate is 0.01.
In terms of evaluation, top 1 accuracy and loss are used as performance metrics.

\begin{figure}
    \centering
    \begin{subfigure}[]{\columnwidth}
    \centering
    \includegraphics[width=0.5\columnwidth]{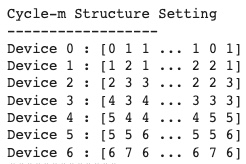}
    \caption{Cycle-m Sharding}
    \label{fig:cycle}
    \end{subfigure}
    \caption{Dataset Sharding}
    \label{fig:data_sharding}
\end{figure}

Figures 
\ref{fig:bfql-inf}, 
\ref{fig:bqfl-avg}, and \ref{fig:bcfl} display the individual test 
accuracy plots for 
BQFL-inf, 
BQFL-avg, and BCFL-avg, respectively.
BQFL-inf performs exceptionally well with non-IID data, as evident from Figure \ref{fig:bfql-inf}.
However, BQFL-avg struggles more with non-IID data, as depicted in Figure \ref{fig:bqfl-avg}, 
with highly fluctuating accuracy between the Top 1 and the lowest accuracy, as shown in Figure \ref{fig:obervations_avg}.
On the other hand, BCFL-avg performs well with the non-IID dataset, as illustrated in Figure \ref{fig:bcfl}.

\begin{figure}
    \centering
    \begin{subfigure}[]{0.45\columnwidth}
    \centering
    \includegraphics[width=\columnwidth]{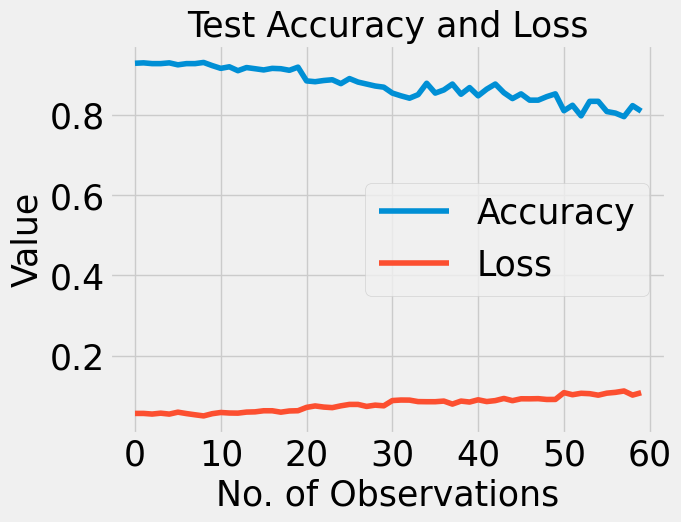}
    \caption{Accuracy observations}
    \label{fig:obervations_inf}
    \end{subfigure}
    \centering
   \begin{subfigure}[]{0.45\columnwidth}
    \centering
    \includegraphics[width=\columnwidth]{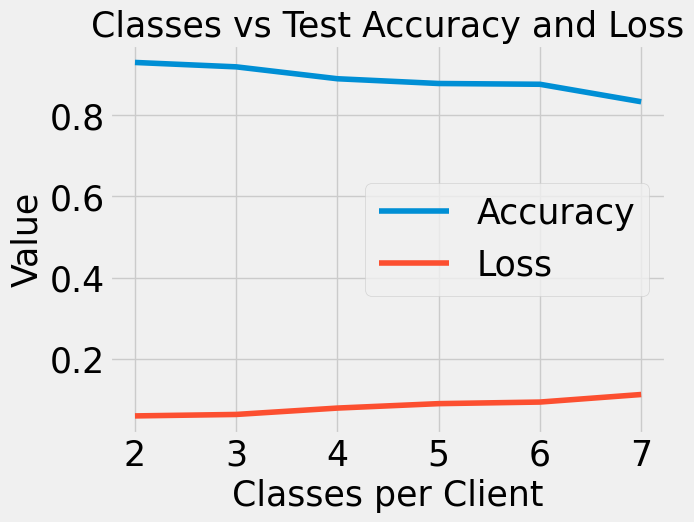}
    \caption{Against Classes}
    \label{fig:against_classes_inf}
    \end{subfigure}
    \caption{BQFL-inference Test Accuracy}
    \label{fig:bfql-inf}
\end{figure}

\begin{figure}
    \centering
     \begin{subfigure}[]{0.45\columnwidth}
    \centering
    \includegraphics[width=\columnwidth]{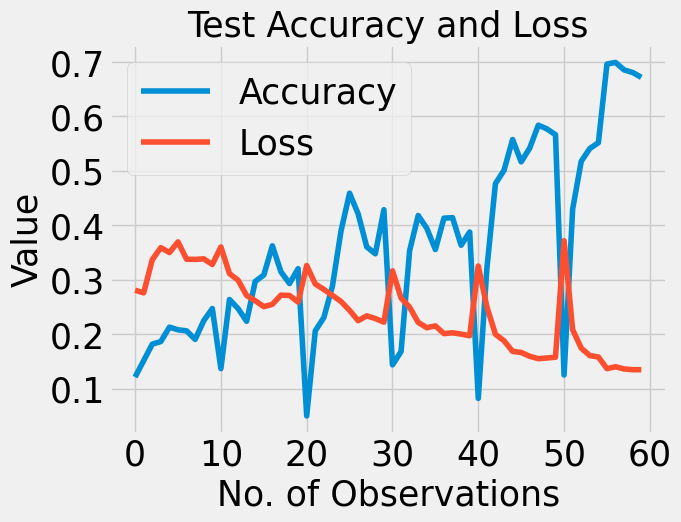}
    \caption{Accuracy observations}
    \label{fig:obervations_avg}
    \end{subfigure}
    \centering
    \begin{subfigure}[]{0.45\columnwidth}
    \centering
    \includegraphics[width=\columnwidth]{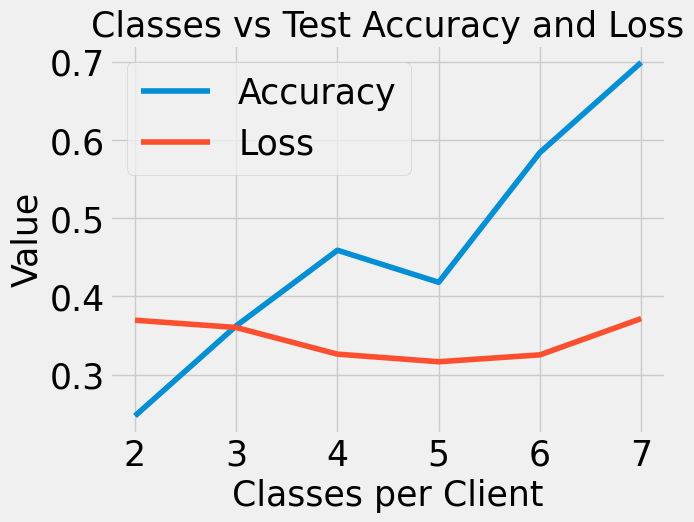}
    \caption{Against Classes}
    \label{fig:against_classes_avg}
    \end{subfigure}
    \caption{BQFL-avg Test Accuracy}
    \label{fig:bqfl-avg}
\end{figure}
\begin{figure}
    \centering
    \begin{subfigure}[]{0.45\columnwidth}
    \centering
    \includegraphics[width=\columnwidth]{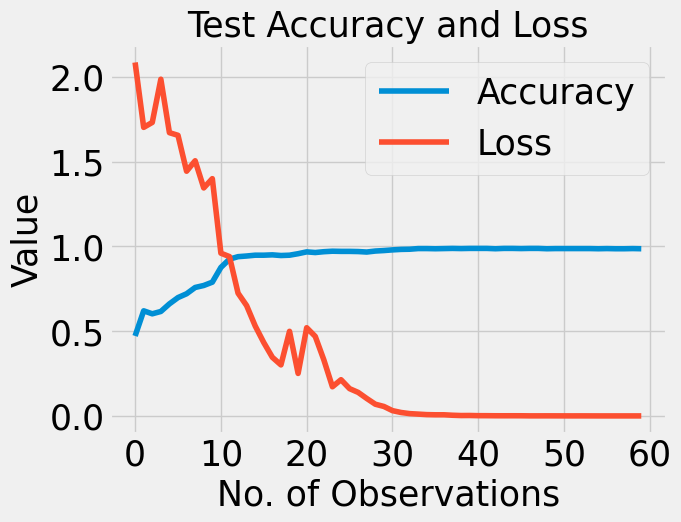}
    \caption{Accuracy observations}
    \label{fig:obervations_bfl}
    \end{subfigure}
    \centering
   \begin{subfigure}[]{0.45\columnwidth}
    \centering
    \includegraphics[width=\columnwidth]{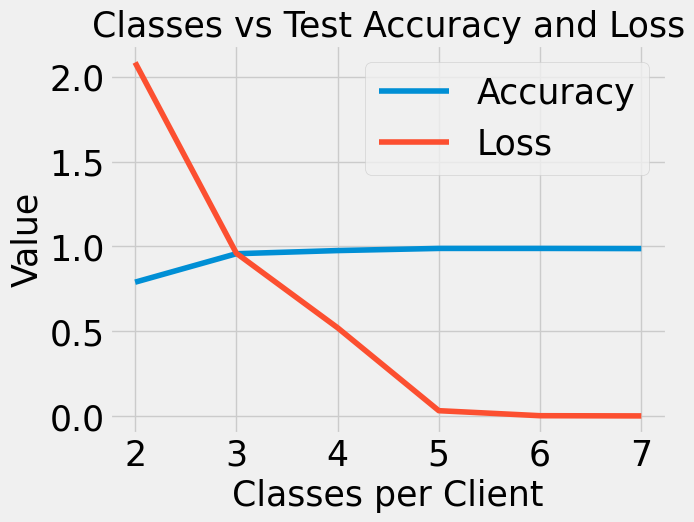}
    \caption{Against Classes}
    \label{fig:against_classes_bfl}
    \end{subfigure}
    \caption{BCFL-avg Test Accuracy}
    \label{fig:bcfl}
\end{figure}

\subsection{Test Performance}

Figure \ref{fig:test_performance} shows the test accuracy plots for BQFL-avg, 
BQFL-inf, 
and BCFL-avg. Among these, BCFL-avg outperforms 
both BQFL-inf and 
BQFL-avg in terms of test accuracy. 
As the degree of non-IID decreases, the test accuracy of BCFL-avg increases. 
However, for BQFL-inf, the test accuracy decreases slightly with an increase in the degree of non-IID. 
On the other hand, BQFL-avg suffers greatly with a higher degree of non-IID, especially when training workers with each having only two classes, resulting in a low final test accuracy.
\begin{figure}
    \centering
    \begin{subfigure}[]{0.45\columnwidth}
    \centering
    \includegraphics[width=\columnwidth]{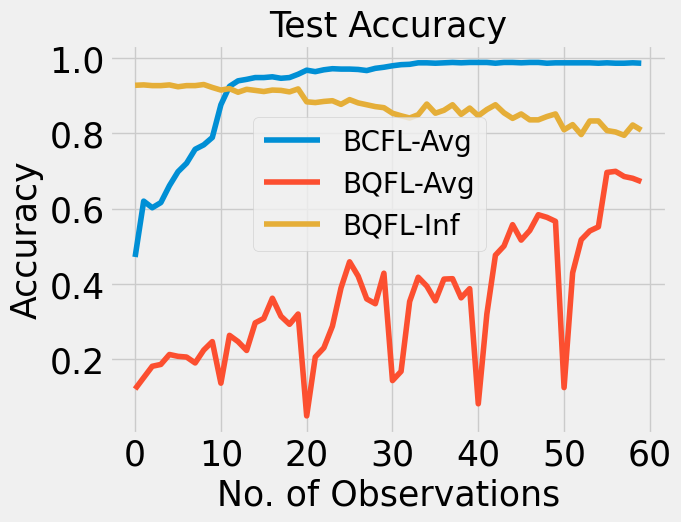}
    \caption{Accuracy observations}
    \label{fig:observations_all}
    \end{subfigure}
    \centering
   \begin{subfigure}[]{0.45\columnwidth}
    \centering
    \includegraphics[width=\columnwidth]{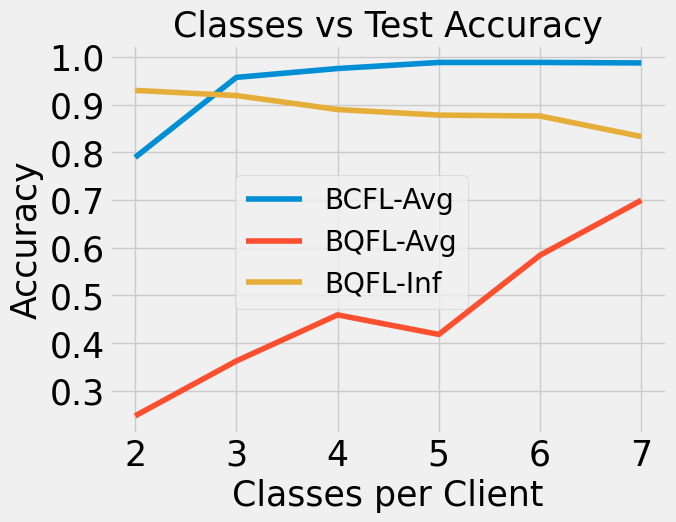}
    \caption{Against Classes}
    \label{fig:against_classes_all}   
    \end{subfigure}
    \caption{Test Accuracy}
    \label{fig:test_performance}
\end{figure}

\subsection{Training Performance}
In addition to evaluating the test accuracy, we also analyzed the training performance of the
different FL frameworks. 
As shown in Figure \ref{fig:training_performance}, both BQFL-avg 
and BQFL-inf 
converge faster than BCFL-avg.
BQFL-avg and BQFL-inf share similar training performance, indicating that the additional
communication overhead incurred by BQFL-inf does not result in significant performance degradation.
However, as shown in Figure \ref{fig:test_performance}, BQFL-inf performance is declining when the degree of non-IID increases.
This is the opposite of BQFL-avg in terms of
test accuracy.
In contrast, as shown in Figure \ref{fig:training_performance},  BCFL-avg has a slower convergence rate compared to the BQFL frameworks, 
especially when the degree of non-IID is high in terms of training. 
Overall, these results suggest that BQFL-avg 
and BQFL-inf 
can achieve better training performance
than BCFL-avg, while BQFL-inf may not be as robust as BQFL-avg in handling non-IID data.

\begin{figure}
    \centering 
    \begin{subfigure}[]{0.45\columnwidth}
    \centering
    \includegraphics[width=\columnwidth]{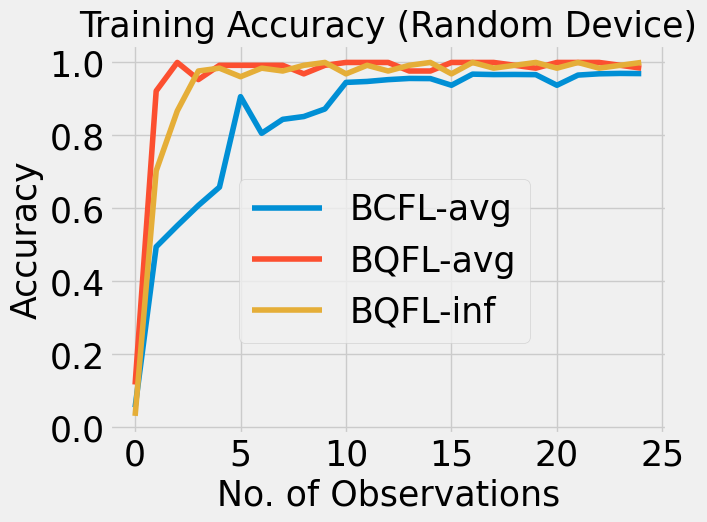}
    \caption{Training Accuracy}
    \label{fig:against_classes}
    \end{subfigure}
    \centering
   \begin{subfigure}[]{0.45\columnwidth}
    \centering
    \includegraphics[width=\columnwidth]{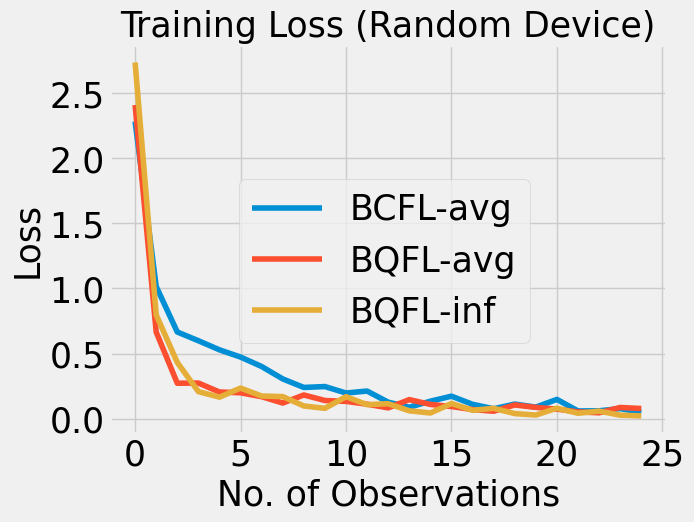}
    \caption{Training Loss}
    \label{fig:obervations}
    \end{subfigure}
    \caption{Training Performance}
    \label{fig:training_performance}
\end{figure}

\subsection{Impact of Degree of Non-IID}
The impact of the degree of non-IID on the test accuracy of BQFL-Avg, 
BQFL-inf and BCFL-
avg is shown in Figure \ref{fig:against_classes_all}. The results reveal that the degree of non-IID 
has varying effects on the performance of the different federated learning algorithms.
First, we observe that  BQFL-Avg is the most impacted by a higher degree of non-IID, as evidenced by
its decreasing test accuracy with increasing non-IID. 
This indicates that BQFL-Avg may not be the 
best choice for federated learning scenarios with highly non-IID data distributions.
In contrast, BQFL-inf is not that significantly impacted by the degree of non-IID. 
This suggests that 
BQFL-inf may be a suitable algorithm for federated learning with non-IID data distributions.
BQFL-avg, on the other hand, shows a more obvious decrease in test accuracy with increasing non-
IID, particularly at higher levels. 
This indicates that BQFL-avg may not perform well in 
federated learning scenarios with highly non-IID data distributions.
Finally, we observe that BCFL-avg performs consistently well across all levels of non-IID. 
In fact,
BCFL-avg outperforms BQFL-avg at all levels of non-IID, indicating that BCFL-avg may be a more robust
algorithm for federated learning with non-IID data distributions.

\subsection{Quantifying Stake Accumulation}
The plot in Figure \ref{fig:stake} indicates that there is a similar trend in stake accumulation for 
all three cases of BQFL-avg, 
BQFL-inf, 
and BCFL-avg. 
However, it is worth noting that there are some variations in the initial stages of stake 
accumulation, especially for BCFL-avg. 
As shown in the plot, stake accumulation starts at a relatively lower level for BCFL-avg, 
but it catches up with the other two methods as the stake accumulation progresses.
It is important to consider stake accumulation as it directly affects the selection of 
representatives for the consensus protocol.
In scenarios where a selection mechanism is implemented, 
higher stake accumulation indicates a higher 
probability of a node being selected as a representative, which in turn, 
increases its influence in the consensus process. 
Therefore, having a steady and predictable stake accumulation rate is crucial for the stability and 
security of the consensus protocol.
However, its actual implementation is limited in this work.

\begin{figure}[!h]
    \centering
     \begin{subfigure}[]{0.45\columnwidth}
    \centering
    \includegraphics[width=\columnwidth]{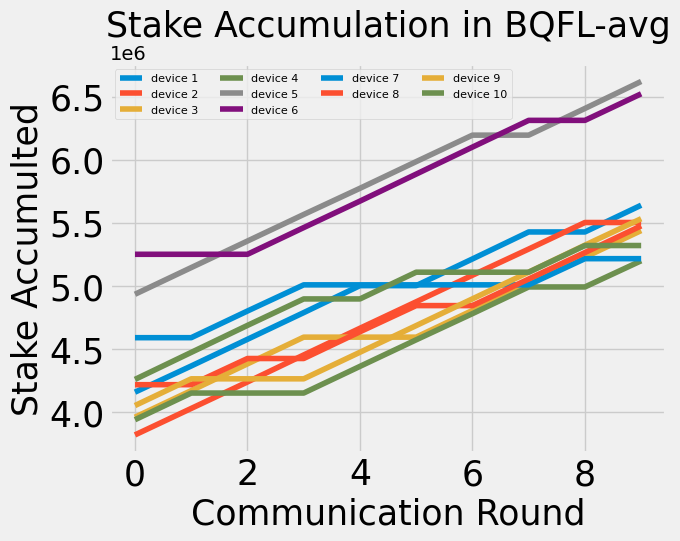}
    \caption{BQFL-avg}
    \label{fig:stake_avg}
    \end{subfigure}
    \centering
    \begin{subfigure}[]{0.45\columnwidth}
    \centering
    \includegraphics[width=\columnwidth]{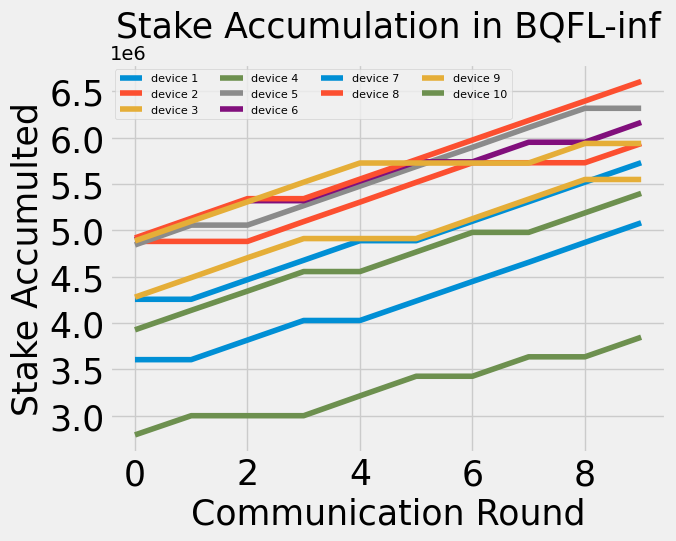}
    \caption{BQFL-inf}
    \label{fig:stake_inf}
    \end{subfigure}
    
     \centering
    \begin{subfigure}[]{0.45\columnwidth}
    \centering
    \includegraphics[width=\columnwidth]{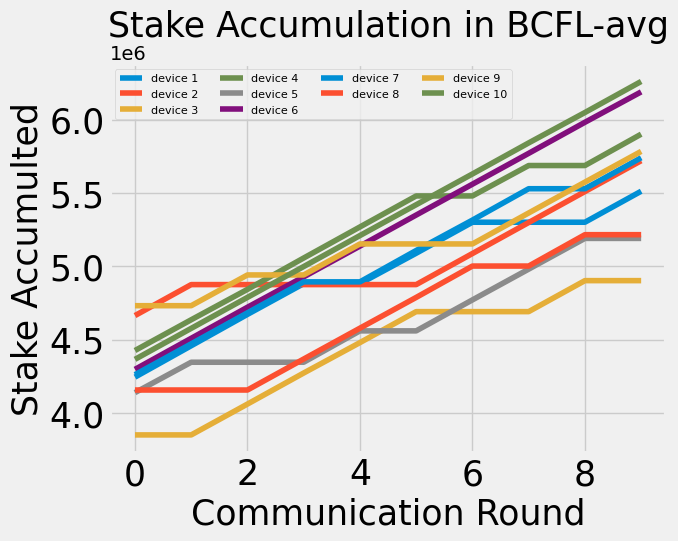}
    \caption{BCFL-avg}
    \label{fig:stake_bcfl}
    \end{subfigure}
    \caption{Stake Accumulation}
    \label{fig:stake}
\end{figure}


\subsection{Delay Performance}
In terms of communication time as shown in Figure \ref{fig:comm_time}, 
BQFL-inf takes the longest time compared to the other algorithms.
BQFL-avg is faster than BCFL-avg in this aspect, which indicates that BQFL-avg can achieve faster
convergence compared to BCFL-avg. However, it's important to note that there is a slight difference
in the way test accuracy and test loss are computed between BQFL and BCFL.
Regarding block generation time as shown in Figure \ref{fig:block_generation}, BCFL-avg takes the longest time of all algorithms. 
BQFL-avg and BQFL-inf 
have similar performance in terms of block generation time. It is worth mentioning that block 
generation time can have a significant impact on the overall performance of the federated learning
algorithm, especially in scenarios where the network has limited resources.
Therefore, a trade-off must be made between communication time and block generation time to achieve optimal performance for a given system.

\begin{figure}
    \centering
    \begin{subfigure}[]{0.45\columnwidth}
    \centering
    \includegraphics[width=\columnwidth]{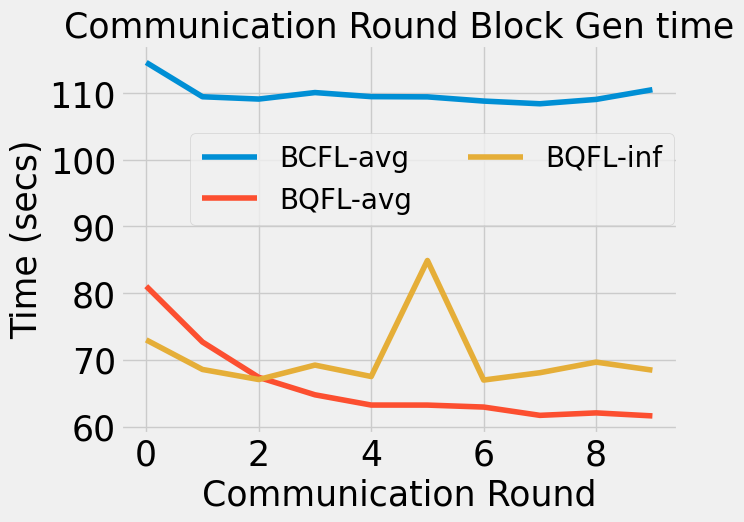}
    \caption{Block Generation Time}
    \label{fig:block_generation}
    \end{subfigure}
    \centering
   \begin{subfigure}[]{0.45\columnwidth}
    \centering
    \includegraphics[width=\columnwidth]{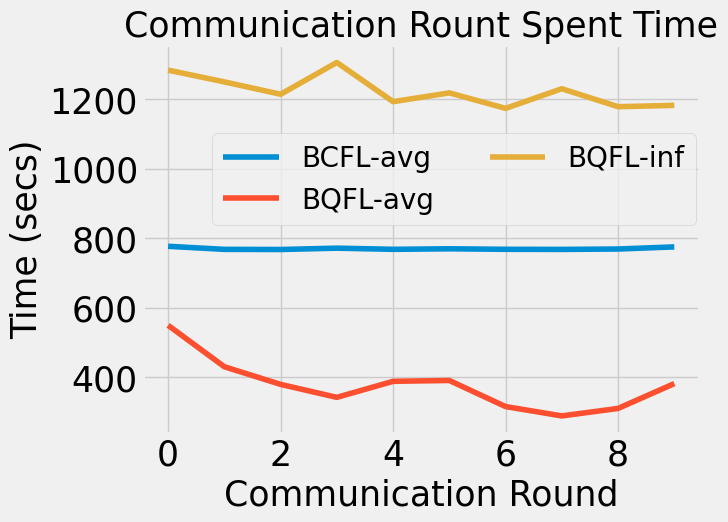}
    \caption{Communication Time}
    \label{fig:comm_time}
    \end{subfigure}
    \caption{Performance}
    \label{fig:performance}
\end{figure}
\subsection{Accounting Metaverse measures}
For a fully immersive experience in the metaverse, a few metrics are known as the service and technical indicators of user experience and feelings in the metaverse
For example, for a lossless visual experience, a download data link is required to be high enough, i.e., 20–40 Mbps \cite{sicknessReductionTechnology2020}, which impacts the resolution, frame rate, motion blur, etc., that determines the feeling of presence and are service indicators. 
In this instance, BQFL can assist in achieving its goal because of its high-performance training and testing, which can be used to create digital twins or any other tasks faster.
From Figures \ref{fig:performance} and \ref{fig:comparison_average_min_max}, it is clear that BQFL-avg performs better than BCFL-avg implying that it is better suited for applications such as metaverse and can fulfill today's increasing data and computational needs.

\begin{figure}
    \centering
    \begin{subfigure}[]{0.45\columnwidth}
    \centering
    \includegraphics[width=\columnwidth]{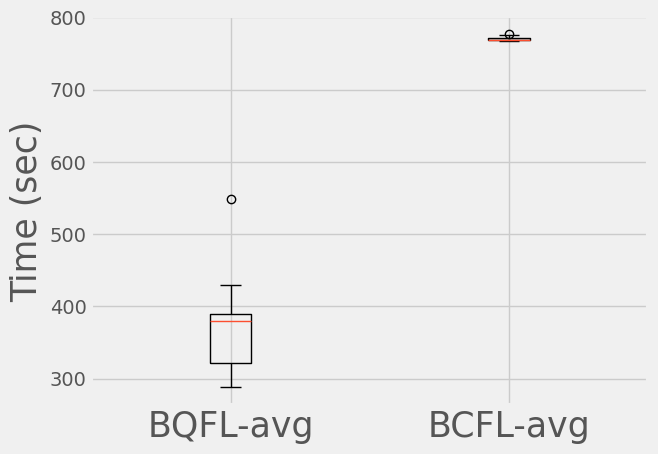}
    \caption{Max Min Time}
    \label{fig:block_generation_minmax}
    \end{subfigure}
    \centering
   \begin{subfigure}[]{0.45\columnwidth}
    \centering
    \includegraphics[width=\columnwidth]{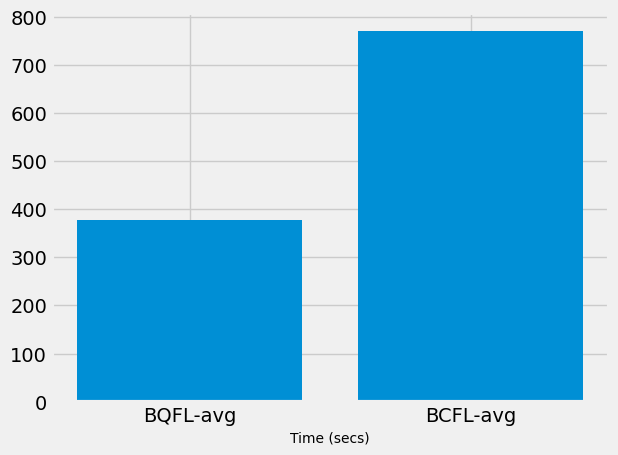}
    \caption{Average Time}
    \label{fig:comm_time_average}
    \end{subfigure}
    \caption{Communication Delay Comparison}
    \label{fig:comparison_average_min_max}
\end{figure}

\section{Concluding Remarks}
In this work, we have developed a rigorous analysis and design of a BQFL framework, considering
its practicality and implementation in Metaverse.
Extensive theoretical and experimental analysis is done to design and understand the behavior of integration between QFL with the blockchain.
 We have developed new insights and explained significant results with new findings. Our experimental results demonstrated the practicality of BQFL.
 However, extensive further research is required to fully understand the practicality of BQFL and its application for the metaverse application.
 which requires further investigation.


\begin{thebibliography}{10}
\providecommand{\url}[1]{#1}
\csname url@samestyle\endcsname
\providecommand{\newblock}{\relax}
\providecommand{\bibinfo}[2]{#2}
\providecommand{\BIBentrySTDinterwordspacing}{\spaceskip=0pt\relax}
\providecommand{\BIBentryALTinterwordstretchfactor}{4}
\providecommand{\BIBentryALTinterwordspacing}{\spaceskip=\fontdimen2\font plus
\BIBentryALTinterwordstretchfactor\fontdimen3\font minus
  \fontdimen4\font\relax}
\providecommand{\BIBforeignlanguage}[2]{{%
\expandafter\ifx\csname l@#1\endcsname\relax
\typeout{** WARNING: IEEEtran.bst: No hyphenation pattern has been}%
\typeout{** loaded for the language `#1'. Using the pattern for}%
\typeout{** the default language instead.}%
\else
\language=\csname l@#1\endcsname
\fi
#2}}
\providecommand{\BIBdecl}{\relax}
\BIBdecl

\bibitem{biamonte2017quantum}
J.~Biamonte, P.~Wittek, N.~Pancotti, P.~Rebentrost, N.~Wiebe, and S.~Lloyd,
  ``Quantum machine learning,'' \emph{Nature}, vol. 549, no. 7671, pp.
  195--202, 2017.

\bibitem{kwakQuantumDistributedDeep2022}
\BIBentryALTinterwordspacing
Y.~Kwak \emph{et~al.}, ``Quantum distributed deep learning architectures:
  {{Models}}, discussions, and applications.'' [Online]. Available:
  \url{https://www.sciencedirect.com/science/article/pii/S2405959522001138}
\BIBentrySTDinterwordspacing

\bibitem{abbasPowerQuantumNeural2021}
\BIBentryALTinterwordspacing
A.~Abbas \emph{et~al.}, ``The power of quantum neural networks,'' vol.~1,
  no.~6, pp. 403--409, comment: 25 pages, 10 figures. [Online]. Available:
  \url{http://arxiv.org/abs/2011.00027}
\BIBentrySTDinterwordspacing

\bibitem{pokhrelFederatedLearningBlockchain2020}
S.~R. Pokhrel and J.~Choi, ``Federated {{Learning With Blockchain}} for
  {{Autonomous Vehicles}}: {{Analysis}} and {{Design Challenges}},'' vol.~68,
  no.~8, pp. 4734--4746.

\bibitem{pokhrelBlockchainBringsTrust2021}
S.~R. Pokhrel, ``Blockchain {{Brings Trust}} to {{Collaborative Drones}} and
  {{LEO Satellites}}: {{An Intelligent Decentralized Learning}} in the
  {{Space}},'' vol.~21, no.~22, pp. 25\,331--25\,339.

\bibitem{larasatiQuantumFederatedLearning2022}
H.~T. Larasati, M.~Firdaus, and H.~Kim, ``Quantum {{Federated Learning}}:
  {{Remarks}} and {{Challenges}},'' in \emph{2022 {{IEEE}} 9th {{International
  Conference}} on {{Cyber Security}} and {{Cloud Computing}} ({{CSCloud}})/2022
  {{IEEE}} 8th {{International Conference}} on {{Edge Computing}} and
  {{Scalable Cloud}} ({{EdgeCom}})}, pp. 1--5.

\bibitem{kaewpuangAdaptiveResourceAllocation2022a}
\BIBentryALTinterwordspacing
R.~Kaewpuang, M.~Xu, D.~Niyato, H.~Yu, Z.~Xiong, and X.~S. Shen, ``Adaptive
  resource allocation in quantum key distribution ({QKD}) for federated
  learning.'' [Online]. Available: \url{http://arxiv.org/abs/2208.11270}
\BIBentrySTDinterwordspacing

\bibitem{yangDecentralizingFeatureExtraction2021a}
C.-H.~H. Yang, J.~Qi, S.~Y.-C. Chen, P.-Y. Chen, S.~M. Siniscalchi, X.~Ma, and
  C.-H. Lee, ``Decentralizing {{Feature Extraction}} with {{Quantum
  Convolutional Neural Network}} for {{Automatic Speech Recognition}},'' in
  \emph{{{ICASSP}} 2021 - 2021 {{IEEE International Conference}} on
  {{Acoustics}}, {{Speech}} and {{Signal Processing}} ({{ICASSP}})}, pp.
  6523--6527.

\bibitem{yunSlimmableQuantumFederated}
W.~J. Yun \emph{et~al.}, ``Slimmable {{Quantum Federated Learning}}.''

\bibitem{xiaQuantumFedFederatedLearning2021a}
Q.~Xia and Q.~Li, ``{{QuantumFed}}: {{A Federated Learning Framework}} for
  {{Collaborative Quantum Training}},'' in \emph{2021 {{IEEE Global
  Communications Conference}} ({{GLOBECOM}})}, pp. 1--6.

\bibitem{xiaDefendingByzantineAttacks2021}
Q.~Xia, Z.~Tao, and Q.~Li, ``Defending {{Against Byzantine Attacks}} in
  {{Quantum Federated Learning}},'' in \emph{2021 17th {{International
  Conference}} on {{Mobility}}, {{Sensing}} and {{Networking}} ({{MSN}})}, pp.
  145--152.

\bibitem{zhangFederatedLearningQuantum2022}
\BIBentryALTinterwordspacing
Y.~Zhang, C.~Zhang, C.~Zhang, L.~Fan, B.~Zeng, and Q.~Yang, ``Federated
  {{Learning}} with {{Quantum Secure Aggregation}}.'' [Online]. Available:
  \url{http://arxiv.org/abs/2207.07444}
\BIBentrySTDinterwordspacing

\bibitem{gurungSECURECOMMUNICATIONMODEL2023}
\BIBentryALTinterwordspacing
D.~Gurung, S.~Pokhrel, and G.~Li, ``{{SECURE COMMUNICATION MODEL FOR QUANTUM
  FEDERATED LEARNING}}: {{A PROOF OF CONCEPT}}.'' [Online]. Available:
  \url{https://openreview.net/forum?id=xZGPLvRpf4N}
\BIBentrySTDinterwordspacing

\bibitem{qiFederatedQuantumNatural2022}
\BIBentryALTinterwordspacing
J.~Qi, ``Federated {{Quantum Natural Gradient Descent}} for {{Quantum Federated
  Learning}},'' comment: Published parts of book in Federated Learning.
  [Online]. Available: \url{http://arxiv.org/abs/2209.00564}
\BIBentrySTDinterwordspacing

\bibitem{yamanyOQFLOptimizedQuantumBased2021}
W.~Yamany, N.~Moustafa, and B.~Turnbull, ``{{OQFL}}: {{An Optimized
  Quantum-Based Federated Learning Framework}} for {{Defending Against
  Adversarial Attacks}} in {{Intelligent Transportation Systems}},'' pp. 1--11.

\bibitem{huangQuantumFederatedLearning2022}
R.~Huang, X.~Tan, and Q.~Xu, ``Quantum {{Federated Learning With Decentralized
  Data}},'' vol.~28, pp. 1--10.

\bibitem{chehimiQuantumFederatedLearning2022}
M.~Chehimi and W.~Saad, ``Quantum {{Federated Learning}} with {{Quantum
  Data}},'' in \emph{{{ICASSP}} 2022 - 2022 {{IEEE International Conference}}
  on {{Acoustics}}, {{Speech}} and {{Signal Processing}} ({{ICASSP}})}, pp.
  8617--8621.

\bibitem{pokhrelFederatedLearningMeets2020}
\BIBentryALTinterwordspacing
S.~R. Pokhrel, ``Federated learning meets blockchain at {{6G}} edge: A
  drone-assisted networking for disaster response,'' in \emph{Proceedings of
  the 2nd {{ACM MobiCom Workshop}} on {{Drone Assisted Wireless
  Communications}} for {{5G}} and {{Beyond}}}, ser. {{DroneCom}} '20.\hskip 1em
  plus 0.5em minus 0.4em\relax {Association for Computing Machinery}, pp.
  49--54. [Online]. Available: \url{https://doi.org/10.1145/3414045.3415949}
\BIBentrySTDinterwordspacing

\bibitem{chenRobustBlockchainedFederated2021}
\BIBentryALTinterwordspacing
H.~Chen \emph{et~al.}, ``Robust {{Blockchained Federated Learning}} with
  {{Model Validation}} and {{Proof-of-Stake Inspired Consensus}},'' comment: 8
  pages, 7 figures, AAAI 2021 Workshop - Towards Robust, Secure and Efficient
  Machine Learning. [Online]. Available: \url{http://arxiv.org/abs/2101.03300}
\BIBentrySTDinterwordspacing

\bibitem{pokhrelDecentralizedFederatedLearning2020}
S.~R. Pokhrel and J.~Choi, ``A {{Decentralized Federated Learning Approach}}
  for {{Connected Autonomous Vehicles}},'' in \emph{2020 {{IEEE Wireless
  Communications}} and {{Networking Conference Workshops}} ({{WCNCW}})}, pp.
  1--6.

\bibitem{zhaoExactDecompositionQuantum2022a}
\BIBentryALTinterwordspacing
H.~Zhao, ``Exact {{Decomposition}} of {{Quantum Channels}} for {{Non-IID
  Quantum Federated Learning}},'' comment: 6 pages excluding appendices and
  references. Code available at https://github.com/JasonZHM/quantum-fed-infer.
  [Online]. Available: \url{http://arxiv.org/abs/2209.00768}
\BIBentrySTDinterwordspacing

\bibitem{duanMetaverseSocialGood2021b}
\BIBentryALTinterwordspacing
H.~Duan, J.~Li, S.~Fan, Z.~Lin, X.~Wu, and W.~Cai, ``Metaverse for {{Social
  Good}}: {{A University Campus Prototype}},'' in \emph{Proceedings of the 29th
  {{ACM International Conference}} on {{Multimedia}}}, pp. 153--161. [Online].
  Available: \url{http://arxiv.org/abs/2108.08985}
\BIBentrySTDinterwordspacing

\bibitem{yangFusingBlockchainAI2022a}
Q.~Yang, Y.~Zhao, H.~Huang, Z.~Xiong, J.~Kang, and Z.~Zheng, ``Fusing
  {{Blockchain}} and {{AI With Metaverse}}: {{A Survey}},'' vol.~3, pp.
  122--136.

\bibitem{metaverseP2PGaming}
\BIBentryALTinterwordspacing
P.~Bhattacharya, A.~Verma, V.~K. Prasad, S.~Tanwar, B.~Bhushan, B.~C. Florea,
  D.~D. Taralunga, F.~Alqahtani, and A.~Tolba, ``Game-o-meta: Trusted federated
  learning scheme for p2p gaming metaverse beyond 5g networks,''
  \emph{Sensors}, vol.~23, no.~9, 2023. [Online]. Available:
  \url{https://www.mdpi.com/1424-8220/23/9/4201}
\BIBentrySTDinterwordspacing

\bibitem{chang6GEnabledEdgeAI2022}
L.~Chang, Z.~Zhang, P.~Li, S.~Xi, W.~Guo, Y.~Shen, Z.~Xiong, J.~Kang,
  D.~Niyato, X.~Qiao, and Y.~Wu, ``{{6G-Enabled Edge AI}} for {{Metaverse}}:
  {{Challenges}}, {{Methods}}, and {{Future Research Directions}},'' vol.~7,
  no.~2, pp. 107--121.

\bibitem{zengHFedMSHeterogeneousFederated2022}
\BIBentryALTinterwordspacing
S.~Zeng, Z.~Li, H.~Yu, Z.~Zhang, L.~Luo, B.~Li, and D.~Niyato. {{HFedMS}}:
  {{Heterogeneous Federated Learning}} with {{Memorable Data Semantics}} in
  {{Industrial Metaverse}}. [Online]. Available:
  \url{http://arxiv.org/abs/2211.03300}
\BIBentrySTDinterwordspacing

\bibitem{preskillQuantumComputingNISQ2018}
\BIBentryALTinterwordspacing
J.~Preskill, ``Quantum {{Computing}} in the {{NISQ}} era and beyond,'' vol.~2,
  p.~79, comment: 20 pages. Based on a Keynote Address at Quantum Computing for
  Business, 5 December 2017. (v3) Formatted for publication in Quantum, minor
  revisions. [Online]. Available: \url{http://arxiv.org/abs/1801.00862}
\BIBentrySTDinterwordspacing

\bibitem{baevskiData2vecGeneralFramework}
A.~Baevski, W.-N. Hsu, Q.~Xu, A.~Babu, J.~Gu, and M.~Auli, ``Data2vec: {{A
  General Framework}} for {{Self-supervised Learning}} in {{Speech}},
  {{Vision}} and {{Language}}.''

\bibitem{QuantumDataBaidu}
``Encoding classical data into quantum states,''
  \url{https://qml.baidu.com/tutorials/machine-learning/encoding-classical-data-into-quantum-states.html},
  (Accessed on 11/15/2022).

\bibitem{zhangTensorCircuitQuantumSoftware2022a}
\BIBentryALTinterwordspacing
S.-X. Zhang \emph{et~al.}, ``{{TensorCircuit}}: A {{Quantum Software
  Framework}} for the {{NISQ Era}},'' comment: Whitepaper for TensorCircuit, 43
  pages, 11 figures, 8 tables. [Online]. Available:
  \url{http://arxiv.org/abs/2205.10091}
\BIBentrySTDinterwordspacing

\bibitem{yunSlimmableQuantumFederated2022}
\BIBentryALTinterwordspacing
W.~J. Yun \emph{et~al.}, ``Slimmable {{Quantum Federated Learning}}.''
  [Online]. Available: \url{https://arxiv.org/abs/2207.10221v1}
\BIBentrySTDinterwordspacing

\bibitem{arthurHybridQuantumClassicalNeural2022}
\BIBentryALTinterwordspacing
D.~Arthur and P.~Date, ``A {{Hybrid Quantum-Classical Neural Network
  Architecture}} for {{Binary Classification}},'' comment: Added reference to
  section I. Fixed error in methods (Section III.C). [Online]. Available:
  \url{http://arxiv.org/abs/2201.01820}
\BIBentrySTDinterwordspacing

\bibitem{cerezoVariationalQuantumAlgorithms2021}
\BIBentryALTinterwordspacing
M.~Cerezo, A.~Arrasmith, R.~Babbush, S.~C. Benjamin, S.~Endo, K.~Fujii, J.~R.
  McClean, K.~Mitarai, X.~Yuan, L.~Cincio, and P.~J. Coles, ``Variational
  {{Quantum Algorithms}},'' vol.~3, no.~9, pp. 625--644, comment: Review
  Article. 33 pages, 7 figures. Updated to published version. [Online].
  Available: \url{http://arxiv.org/abs/2012.09265}
\BIBentrySTDinterwordspacing

\bibitem{jerbiQuantumMachineLearning2023}
\BIBentryALTinterwordspacing
S.~Jerbi, L.~J. Fiderer, H.~Poulsen~Nautrup, J.~M. Kübler, H.~J. Briegel, and
  V.~Dunjko, ``Quantum machine learning beyond kernel methods,'' vol.~14,
  no.~1, p. 517. [Online]. Available:
  \url{https://www.nature.com/articles/s41467-023-36159-y}
\BIBentrySTDinterwordspacing

\bibitem{gargAdvancesQuantumDeep2020}
\BIBentryALTinterwordspacing
S.~Garg and G.~Ramakrishnan. Advances in {{Quantum Deep Learning}}: {{An
  Overview}}. [Online]. Available: \url{http://arxiv.org/abs/2005.04316}
\BIBentrySTDinterwordspacing

\bibitem{liConvergenceFedAvgNonIID2020}
X.~Li, K.~Huang, W.~Yang, S.~Wang, and Z.~Zhang, ``On the convergence of fedavg
  on non-iid data,'' \emph{arXiv preprint arXiv:1907.02189}, 2019.

\bibitem{weigoldEncodingPatternsQuantum2021}
\BIBentryALTinterwordspacing
M.~Weigold, J.~Barzen, F.~Leymann, and M.~Salm, ``Encoding patterns for quantum
  algorithms,'' vol.~2, no.~4, pp. 141--152. [Online]. Available:
  \url{https://onlinelibrary.wiley.com/doi/abs/10.1049/qtc2.12032}
\BIBentrySTDinterwordspacing

\bibitem{sicknessReductionTechnology2020}
``Ieee standard for head-mounted display (hmd)-based virtual reality(vr)
  sickness reduction technology,'' \emph{IEEE Std 3079-2020}, pp. 1--74, 2021.

\end{thebibliography}



\end{document}